\documentclass[12pt,]{article}
\usepackage{scicite}
\usepackage{times}
\usepackage{makecell}
\usepackage{pdflscape}
\usepackage{multirow}

\usepackage{xcolor}
\usepackage{amsmath} 
\usepackage{amssymb}
\usepackage{amsthm}
\usepackage{subcaption}
\usepackage{graphicx}
\usepackage[letterpaper,margin=1in]{geometry}
\usepackage{hyperref}
\usepackage{gensymb}

\usepackage{float}
\usepackage{xspace}
\usepackage[vlined,ruled,linesnumbered]{algorithm2e}
\usepackage{algorithmic}
\usepackage{enumerate}

\PassOptionsToPackage{end}{algorithmic}
\usepackage{cleveref}
\usepackage{pifont}    
 
\newcommand{\cmark}{\checkmark}
\newcommand{\xmark}{\ding{55}}
\usepackage{booktabs}   
\usepackage{colortbl}   
\usepackage{multirow}   
\usepackage{arydshln}   

\definecolor{goodgreen}{HTML}{1B7A2B}
\definecolor{badred}{HTML}{B71C1C}
\definecolor{nagray}{HTML}{888888}
 
\definecolor{hwbg}{HTML}{E8F0E5}
\definecolor{ctrlbg}{HTML}{E1E9F2}
\definecolor{implbg}{HTML}{F2E9DF}

\newcommand{\gc}{\textcolor{goodgreen}{\cmark}}
\newcommand{\bc}{\textcolor{badred}{\xmark}}

\definecolor{sc0}{HTML}{D32F2F}   
\definecolor{sc1}{HTML}{E8873A}   
\definecolor{sc2}{HTML}{A3B938}   
\definecolor{sc3}{HTML}{2E7D32}   
 
\newfloat{movie}{tbhp}{lom}
\floatname{movie}{Movie}
\crefname{movie}{movie}{movies}
\Crefname{movie}{Movie}{Movies}

\newcommand{\ie}{\emph{i.e.},\xspace}
\newcommand{\eg}{\emph{e.g.},\xspace}

\newtheorem{theorem}{Theorem}

\newtheorem{corollary}{Corollary}

\newtheorem{proposition}{Proposition}

\newcommand{\bdmath}{\begin{dmath}}
\newcommand{\edmath}{\end{dmath}}
\newcommand{\beq}{\begin{equation}}
\newcommand{\eeq}{\end{equation}}
\newcommand{\bdm}{\begin{displaymath}}
\newcommand{\edm}{\end{displaymath}}
\newcommand{\bea}{\begin{eqnarray}}
\newcommand{\eea}{\end{eqnarray}}
\newcommand{\beal}{\beq \begin{array}{lll}}
\newcommand{\eeal}{\end{array} \eeq}
\newcommand{\beas}{\begin{eqnarray*}}
\newcommand{\eeas}{\end{eqnarray*}}
\newcommand{\ba}{\begin{array}}
\newcommand{\ea}{\end{array}}
\newcommand{\bit}{\begin{itemize}}
\newcommand{\eit}{\end{itemize}}
\newcommand{\ben}{\begin{enumerate}}
\newcommand{\een}{\end{enumerate}}

\definecolor{myblue}{RGB}{65 105 225}

\newcommand{\hide}[1]{}

\newcommand{\hiddenText}{{\color{gray} hidden text.}}
\newcommand{\hideWithText}[1]{\hiddenText}

\newcommand{\morphable}{MorphQuad\xspace}

\DeclareMathOperator{\tr}{tr}

\definecolor{SpringGreen}{RGB}{40,160,80} 
\definecolor{SkyBlue}{RGB}{60,130,200}   
\definecolor{Khaki}{RGB}{160,100,70}     
\definecolor{Lavender}{RGB}{180,100,180}  
\newcommand{\replytoall}[1]{\textcolor{SkyBlue}{#1}}
\newcommand{\replytoone}[1]{\textcolor{SpringGreen}{#1}}
\newcommand{\replytotwo}[1]{\textcolor{Khaki}{#1}}
\newcommand{\replytothree}[1]{\textcolor{Lavender}{#1}}

\newenvironment{sciabstract}{%
\begin{quote} \bf}
{\end{quote}}

\usepackage{caption}
\title{MorphQuad: Morphable Quadrotor for Superhuman Maneuverability, Manipulation, and Resiliency}

\author
{Jos\'{e} D\'{i}az Pe\'{o}n Gonz\'{a}lez Pacheco,$^{\star}$ Jiawei Xu,$^{\star}$ Andrew Zhao,$^{\star}$ 
Hongyu Zhou,$^{\dagger}$\\ Atharva Navsalkar,$^{\dagger}$ Andrew Scheffer,$^{\dagger}$   
Amrith Malli Reddi,$^{\dagger}$ Sashreek Shankar,$^{\dagger}$\\ Yuqing (Ivan) Bao,$^{\dagger}$
Vasileios Tzoumas\\
\normalsize{Aerospace Engineering, University of Michigan}\\
\normalsize{$^{\star}$Equal first-author contribution}~~
\normalsize{$^{\dagger}$Equal second-author contribution}\\
}

\usepackage{booktabs}
\usepackage{colortbl}
\usepackage{xcolor}
\usepackage{pifont}
\usepackage{arydshln}  

\definecolor{headergray}{gray}{0.92}
\definecolor{scoregold}{RGB}{180,140,20}
\definecolor{scoregreen}{RGB}{30,130,76}
\definecolor{scorered}{RGB}{180,60,60}
\renewcommand{\replytoall}[1]{{#1}}
\renewcommand{\replytoone}[1]{{#1}}
\renewcommand{\replytotwo}[1]{{#1}}
\renewcommand{\replytothree}[1]{{#1}}

\begin{document} 


\baselineskip24pt

\date{}
\maketitle


\begin{sciabstract}
    Infrastructure maintenance, contact-based inspection, and emergency response can benefit from aerial vehicles that act as a flying human hand with extreme maneuverability, manipulation, and resiliency (MMR): \textit{maneuverability} to fly in arbitrary orientations to reach tight and remote locations; \textit{manipulation} to point sensors, turn valves, and press tools at arbitrary orientations; \textit{resiliency} to maintain motion accuracy and force application despite disturbances from arbitrary directions, such as wind, ground effects, and friction. Realizing MMR requires more than omnidirectional flight; it also requires (I) vectoring of maximum thrust in any direction, to maximize the capacity for contact-force application and disturbance rejection, (II) almost-everywhere stability, accounting for actuation challenges, including kinematic singularities, control singularities, and inter-rotor downwash interference, to enable reliable control over any position/orientation, and (III) miniaturizable structural designs, to enable traversal in tight spaces. Existing aerial vehicles satisfy a subset of I--III: fixed-tilt vehicles cannot vector maximum thrust in any direction due to internal thrust cancellations; variable-tilt vehicles either have kinematic and control singularities or downwash interference unaccounted for, or employ vehicle-of-vehicles structures that hinders miniaturizability.
    We present \morphable, a morphable quadrotor that delivers MMR by achieving I--III through \replytothree{novel designs of hardware and control algorithms}. On hardware, \morphable preserves a miniaturizable quadrotor structure, and independently articulates each of its four rotor systems via a two-axis gimbal, \replytothree{eliminating control singularities} and enabling all rotors to align and vector the platform's maximum thrust in any direction. On control, we introduce an almost-everywhere stable controller, composed of a generalized geometric control and a thrust allocation that accounts for kinematic singularities (gimbal lock) and downwash interference.
    \replytotwo{With fully-onboard computation, localization, and trajectory execution}, \morphable demonstrates continuous multi-revolution rotation while translating or hovering, for applications such as pipe inspection and target tracking (maneuverability); valve turning, perching, and object pressing and pushing with human-wrist strength (manipulation); and push-pull recovery and wind rejection, even when directed to a single rotor (resiliency).
\end{sciabstract}

\section*{Introduction}
Infrastructure maintenance~\cite{park2018odar,lee2025autonomous}, contact-based inspection~\cite{bodie2020active,trujillo2019novel}, and emergency response~\cite{waharte2010supporting,alawad2023unmanned,seneviratne2018smart} can benefit from aerial vehicles that act as flying human hands with extreme maneuverability, manipulation, and resiliency (MMR):
\textbf{(a) Maneuverability}: to reach remote locations~\cite{anand2023drones,xing2024morphing}, \replytoone{continuously track complex trajectories in tight spaces}~\cite{rs15133266,zhao2018flight,Niroui2019deep}, and \replytoone{inspect mega-structures} such as around and in-between pipes and ducts~\cite{Ruggia2025flifo,nooralishahi2021drone} from any direction.
\textbf{(b) Manipulation}: to push obstacles away~\cite{ollero2022aerial}, turn valves~\cite{Yang2018LASDRA}, and press sensors against infrastructure surfaces to assess structural integrity~\cite{trujillo2019novel}.
\textbf{(c) Resiliency}: to move and apply forces with accuracy despite challenging disturbances, such as wind~\cite{xing2023wind,oconnell2022neuralfly}, wall, ceiling, and ground effects~\cite{yang2025ground,garofano2021aerodynamic}, and counter-acting contact forces, \textit{e.g.,} friction and elastic force~\cite{watson2025fly,alexis2016aerial,xu2025airbender}.

Realizing MMR on multirotor aerial vehicles is more than omnidirectional flight~\replytothree{\cite{hamandi2021design,lin2025configurations,shu2026framework}}, \ie flying in arbitrary directions at any attitude; it also requires (I) vectoring of maximum thrust in any direction, (II) almost-everywhere stability, even upon accounting for actuation challenges, such as kinematic and control singularities, and inter-rotor downwash interference, and (III) miniaturizable structural designs (\Cref{fig:intro}).
\textbf{Enabler I: Vectoring of maximum thrust in any direction.} Generating maximum thrust in any desired direction ---up to the capacity of the vehicle's rotors and independently of vehicle orientation--- delivers maximum acceleration, contact force application, and disturbance rejection in any direction.
\textbf{Enabler II: \replytoall{Almost-everywhere stability, accounting for actuation challenges.}} \replytoall{Maintaining stability during motion from any position and orientation to any other requires a motion control that is provably almost-everywhere stable, while accounting for practical actuation challenges such as kinematic singularity (\ie gimbal lock, where a small change in the target wrench demands rapid servo articulation, \Cref{fig:nullspace-ablation}(B)), control singularity (input/wrench-matrix rank deficiency)~\cite{bodie2018towards}, and inter-rotor downwash interference~\cite{su2022downwash}, so that the controller conforms to the hardware dynamics and guarantees reliable maneuverability and manipulation amid disturbances.}
\textbf{Enabler III: \replytoall{Miniaturizable structural designs.}} \replytoall{Building on a planar multirotor layout free of cages or linkages, such as a quadrotor, allows the design to miniaturize to palm-sized footprints, similar to the Crazyflie~\cite{preiss2017crazyswarm}, for traversing tight spaces.}


Among current aerial vehicles design and control, each enjoys a subset of Enablers I--III: structural and control limitations either constrain the actuation~\cite{brescianini2016design,park2016design,park2018odar,Yang2018LASDRA,tognon2018omni,hamandi2020omni,lee2024geometric,veenstra20266d,ryll2021fast,aboudorra2024modelling,ryll2012modeling,ryll2014novel,li2025six,lee2025autonomous,kamel2018voliro,ji2019modeling,skygauge,luo2026agile},
or require vehicle-of-vehicles~\cite{su2021nullspace,zhao2018transformable,zhao2023versatile,zhao2025vectorable} or non-planar many-rotor designs~\cite{veenstra20266d}.
We next review the state of the art in more detail, from fixed-tilt to variable-tilt designs, and summarize the most relevant platforms \replytotwo{and their hardware validation} in~\Cref{tab:comparison}. \replytotwo{A cross in the hardware-validation columns indicates the absence of experiments in the cited work.}

\newpage
\begin{figure}[t!]
    \centering
    \includegraphics[width=1.\linewidth]{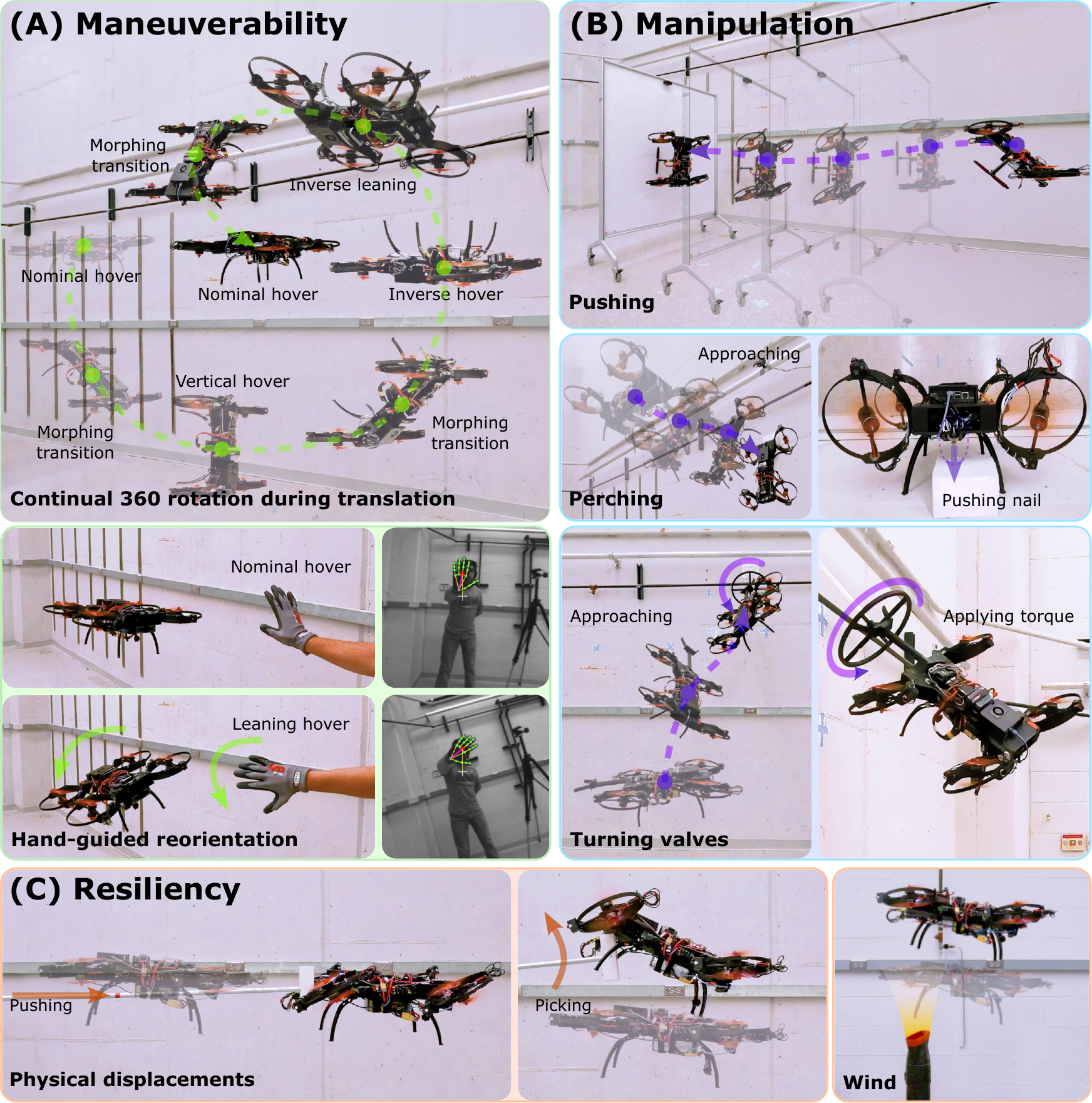}
    \caption{We present \morphable, a morphable quadrotor that promises extreme maneuverability, manipulation, and resiliency (MMR). In hardware experiments with \replytotwo{fully-onboard computation, localization, and trajectory execution} without motion capture, \morphable demonstrates \textbf{(A)}~maneuverability, to perform continuous multi-revolution rotation while translating around a pipe or pointing at arbitrary directions while hovering;
    \textbf{(B)}~manipulation, for pushing objects, perching on walls and pressing nails, and turning valves at arbitrary orientations; and \textbf{(C)}~resiliency, for disturbance rejection under physical pushes, pulls, and strong winds from arbitrary directions.}
    \label{fig:intro}
\end{figure}
\clearpage

\begin{landscape}
\newpage
\begin{table}[t!]
\caption{Comparison of the most relevant omnidirectional multirotor vehicles: each vehicle and its control enjoys a subset of Enablers I--III. Specifically, their structural and control limitations either constrain the range of actuation
or require vehicle-of-vehicles and non-planar many-rotor designs that may hinder miniaturizability.}
\label{tab:comparison}
\centering
\fontsize{9}{10}
\selectfont
\setlength{\tabcolsep}{5pt}
\renewcommand{\arraystretch}{1.0}
\renewcommand\cellset{\renewcommand\arraystretch{0.5}}
\begin{tabular}{c l c c c | c c c}
\toprule
& & \multicolumn{3}{c}{\textbf{Enablers of maneuverability, manipulation, and resiliency}} & \multicolumn{3}{c}{\textbf{Hardware validation}} \\
& &
\makecell{Vectoring of \\maximum thrust \\in any direction} &
\makecell{Almost-everywhere \\stability, accounting for \\actuation challenges} &
\replytotwo{\makecell{Miniaturizable \\structural designs}} &
\makecell{Onboard localization\\ and trajectory \\execution} &
\makecell{Manipulation \\on top of \\omnidirectional flight} & 
\makecell{Disturbance\\ rejection}\\
\midrule
\multirow{4}{*}[0.9em]{
\rotatebox[origin=c]{90}{\makecell{\textbf{Fixed-tilt}}}}
& Yang, 2018~\cite{park2016design,park2018odar,Yang2018LASDRA}  & \bc & \gc & \bc & \bc & \gc & \gc \\ \cline{2-8}
& Lee, 2024~\cite{lee2024geometric}                             & \bc & \gc & \bc & \bc & \bc & \bc \\ \cline{2-8}
& Veenstra, 2026~\cite{veenstra20266d}                          & \bc & \gc & \bc & \bc & \gc & \bc \\ 
\midrule
\multirow{10}{*}{
\rotatebox[origin=c]{90}{
\makecell{\textbf{Variable-tilt}}
}}
& Ryll, 2014~\cite{ryll2012modeling,ryll2014novel}              & \bc & \bc & \gc & \bc & \bc & \bc \\ \cline{2-8}
& Kamel, 2018~\cite{kamel2018voliro}                            & \bc & \bc & \gc & \bc & \gc & \bc \\ \cline{2-8}
& Ji, 2019~\cite{ji2019modeling}                                & \bc & \bc & \gc & \bc & \bc & \bc \\ \cline{2-8}
& Bodie, 2020~\cite{bodie2018towards,bodie2020active,bodie2019omnidirectional} 
                                                                & \bc & \bc & \gc & \bc & \gc & \gc \\ \cline{2-8}
& Su, 2021~\cite{su2021nullspace}                               & \gc & \bc & \bc & \bc & \bc & \bc \\ \cline{2-8}
& Zhao, 2023~\cite{zhao2018transformable,zhao2023versatile}     & \gc & \bc & \bc & \bc & \gc & \bc \\ \cline{2-8}
& SkyGauge, 2023~\cite{skygauge}${}^\ast$                       & \bc & \bc & \gc & \gc & \gc & \gc \\ \cline{2-8}
& Li, 2025~\cite{li2025six}                                     & \bc & \bc & \gc & \bc & \bc & \bc \\ \cline{2-8}
& Lee, 2025~\cite{lee2025autonomous}                            & \bc & \bc & \gc & \bc & \gc & \gc \\ \cline{2-8}
& Aerix, 2025~\cite{aerix2023patent,aerix2025patent}${}^\ast$   & \replytothree{$\triangle$} & \replytothree{$\triangle$} & \replytothree{$\triangle$} & \gc & \bc & \gc \\
\midrule
\multirow{2}{*}{\rotatebox[origin=c]{90}{\textbf{Ours}}}
& MorphEUS, 2025~\cite{iral2025morpheus}                        & \gc & \bc & \gc & \bc & \bc & \bc \\ \cline{2-8}
& \textbf{MorphQuad (this paper)}                               & \gc & \gc & \gc & \gc & \gc & \gc \\
\bottomrule
\end{tabular}
\end{table}
\noindent
${}^\ast$No peer-reviewed publications; assessed from public patent information.

\noindent
\replytoall{$\triangle$: The hardware~\cite{aerix2023patent} and control~\cite{aerix2025patent} patent applications from Aerix Systems claim a quadrotor structure and omnidirectional flight, but the hardware and algorithm details are not available. In particular,~\cite{aerix2025patent} claims ``a propulsion offset'' that ``shifts the propulsion when one or more thrusters approach a 90$^\circ$ orientation pole'' to address gimbal lock without quantifying the ``offset'', whereas in this paper, we utilize the null-space of the input matrix to address both gimbal lock and downwash interference (\Cref{eq:pseudoinverse-general}), while maintaining almost-everywhere stability.}
\clearpage
\end{landscape}
\paragraph{Fixed-tilt multirotors.} Fixed-tilt multirotors do not satisfy Enabler I and cannot simultaneously realize Enablers II and III. Fixed-tilt multirotors lack rotor articulation: each rotor generates thrust in a static direction with respect to the vehicle body.
\textbf{(a) Fixed-tilt multirotors with all rotors pointing in the same direction}, such as standard quadrotors, achieve Enabler III; however, the rotors collectively generate maximum thrust only in a single direction~\cite{bouabdallah2004design,mellinger2012trajectory,michieletto2018fundamental,mueller2014stability,artale2013mathematical}.
This causes the inability to translate or apply force in lateral directions without rotating.
\textbf{(b) Fixed-tilt multirotors with rotors pointing in different directions} achieve Enabler II.
Omnidirectionality on this configuration requires a large number of rotors oriented in fixed, different directions on the vehicle body~\cite{rashad2020fully,xu2021h,falanga2018foldable,rajappa2015modeling}: seven or more unidirectional rotors~\cite{tognon2018omni,hamandi2020omni,brescianini2016design}, or six or more bidirectional rotors~\cite{park2016design,park2018odar,veenstra20266d,lee2024geometric}, all in optimized placement. Such designs require non-planar structures~\cite{tognon2018omni,hamandi2020omni,brescianini2016design}, where rotors generate counteracting thrust during operation.

\paragraph{Variable-tilt multirotors.} Existing variable-tilt multirotor vehicles have limited articulation~\cite{zheng2020tiltdrone,skygauge,luo2026agile,lee2023drone,zhu2024omnidirectional,cristobal2025gimballed}, employ linkages and vehicle-of-vehicles structures~\cite{zhao2018transformable,zhao2023versatile,su2021nullspace}, or have kinematic singularity~\cite{ji2019modeling}, control singularity~\cite{ryll2012modeling,ryll2014novel,ryll2016modeling,aboudorra2024modelling,li2025six,lee2026backstepping,zhang2026learning,lee2025autonomous,kamel2018voliro}, or downwash interference~\cite{bodie2018towards,bodie2020active,bodie2019omnidirectional}  challenges.
\begin{figure}[t]
    \centering
    \includegraphics[width=1.0\textwidth]{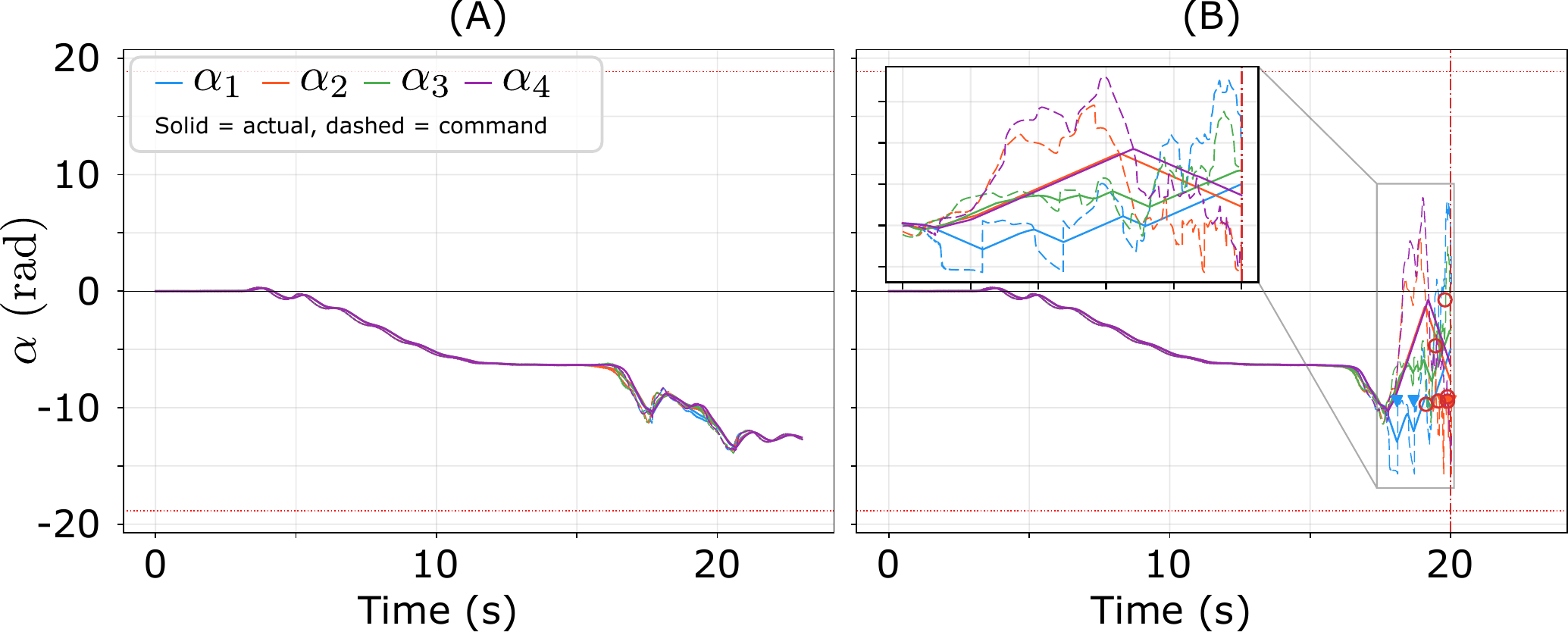}
    \caption{We demonstrate the effectiveness of our thrust allocation that accounts for the gimbal lock, using the comparison of commanded and actual outer servo positions, $\alpha_i$. (A) illustrates the servo position with the null-space correction for gimbal lock and (B) without, in a simulated trajectory tracking task. The ``vertical jumps'' of the commanded servo position (blue dashed lines in the insert plot) in (B) mark the gimbal lock, where the minimum-norm-only thrust allocation demands rapid articulation, resulting in saturated servo actuation and flight destabilization. Details of the null-space correction are available in \nameref{sec:materials}, where a similar algorithm is also used to account for downwash interference.}
    \label{fig:nullspace-ablation}
\end{figure}
\noindent\textbf{(a) Variable-tilt multirotors with limited rotor articulation} face inter-rotor thrust cancellation whenever the vehicle operates away from nominal hover, and may have a limited tilting range~\cite{zheng2020tiltdrone,skygauge,luo2026agile,cristobal2025gimballed}. Designs in this category employ synchronized rotor articulation or independent single-axis rotor articulation.
Synchronized rotor articulation~\cite{zheng2020tiltdrone,ryll2016modeling,luo2026agile,lee2023drone,zhu2024omnidirectional,aboudorra2024modelling} brings the control singularity~\cite{ryll2016modeling,aboudorra2024modelling} and can have limited thrust vectoring range due to mechanical stoppers~\cite{zheng2020tiltdrone,luo2026agile}. Independent single-axis rotor articulation that empowers a planar structure to achieve Enabler III can have control singularity and downwash interference, risking flight stability~\cite{ryll2012modeling,ryll2014novel,li2025six,lee2026backstepping,zhang2026learning,lee2025autonomous,kamel2018voliro}.
\replytothree{Although vehicles can mitigate the control singularity~\cite{bodie2018towards,bodie2020active,bodie2019omnidirectional} with differential thrust allocation~\cite{allenspach2020design,cuniato2026allocation}, the downwash interference remains an unresolved actuation challenge for single-axis articulation designs, on top of internal thrust cancellation.}
\noindent\textbf{(b) Variable-tilt multirotors with omnidirectional rotor articulation} \replytotwo{either employ complex interconnection structures, or may account for downwash interference but without proving almost-anywhere stability~\cite{su2022downwash,zhao2025vectorable}.} Vehicle-of-vehicle designs, through chaining~\cite{zhao2018transformable,zhao2023versatile,zhao2025vectorable} or gimbaling~\cite{su2021nullspace} smaller multirotor modules, may not be miniaturizable. Single-vehicle designs include the tilting quadcopter~\cite{ji2019modeling} and Aerix quadrotor~\cite{aerix2023patent}:
~\cite{ji2019modeling} presents a conceptual quadrotor with dual-axis rotor articulation. The single-propeller rotor system induce an aerodynamic drag when generating thrust, and a gyroscopic torque when articulating each spinning rotor; as the controller compensates these torques even during hover, the rotors spend thrust on the torque compensations that may prevent omnidirectional maximum thrust vectoring. 
\replytothree{The hardware patent application~\cite{aerix2023patent} from Aerix Systems presents a quadrotor structure that enables omnidirectional rotor-tilting, but does not disclose the actual hardware implementation of the public vehicle.
Although the patent application from Aerix~\cite{aerix2025patent} claims ``a propulsion offset'' that ``shifts the propulsion when one or more thrusters approach a 90$^\circ$ orientation pole'' to address the gimbal lock, the description does not provide the underlying algorithms or mathematical details.}
\replytoone{In our preliminary work~\cite{iral2025morpheus}, we presented MorphEUS, a conceptual morphable quadrotor design that promises Enablers I--III with two-axis articulated rotors and a controller that guarantee almost-everywhere stability with minimum-norm thrust allocation, where hardware implementation and actuation challenges remained unaddressed. In the following paragraphs, we present \morphable, the hardware realization of MorphEUS, along with an almost-everywhere stable controller that is composed of a generalized geometric control and a thrust allocation that accounts for gimbal lock and downwash interferences, and experimental evaluation of \morphable for MMR.}

\newpage
\begin{movie}[h]
    \centering
    \includegraphics[width=1.0\linewidth]{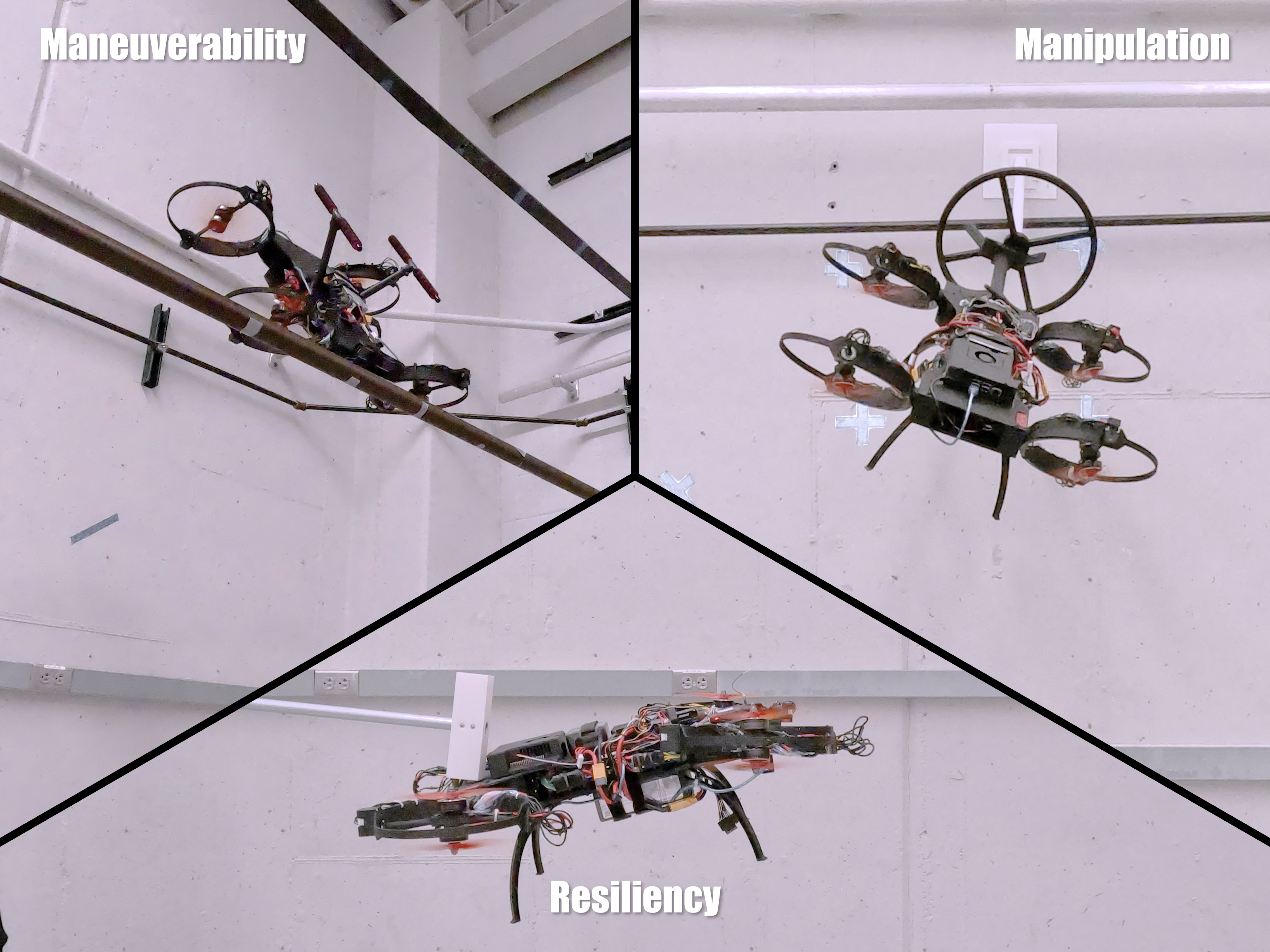}
    \caption{\href{https://tinyurl.com/MorphQuad-main-movie}{Overview movie of \morphable}: We present a morphable quadrotor capable of omnidirectional free and in-contact flight ---to transform infrastructure maintenance, contact-based inspection, and emergency response. The quadrotor promises to move like a flying human hand: it can perch, push, and pull in any direction; turn inclined valves; and perform continuous helical multi-revolution motion to inspect pipes. We demonstrate \replytotwo{fully-onboard computation, localization, and trajectory execution}, with resilience to wind and push and pull disturbances.}
    \label{mov:movie_still}
\end{movie}
\clearpage

\subsection*{Contributions}
We present \morphable, a morphable multirotor that embodies Enablers I--III in a quadrotor structure, with \replytotwo{fully-onboard computation, localization, and trajectory execution} for (a)~maneuverability, (b)~manipulation, and (c)~resiliency (\Cref{mov:movie_still}).
\textbf{(a) Maneuverability:} \morphable performs continuous helical multi-revolution motion while pointing the camera at a pipe for inspection, and pointing while hovering for hand tracking. \replytothree{Existing variable-tilt vehicles complete at most one full revolution~\cite{bodie2018towards,allenspach2020design,lee2025autonomous}, as further rotation requires cable ``unwinding''. \morphable, instead, achieves up to six full revolutions, and thus can uninterruptedly execute trajectories of higher complexity.}\footnote{\replytothree{Slip rings could be used instead to enable unlimited rotation for our and existing variable-tilt vehicles.}}
\textbf{(b) Manipulation:} \morphable can apply torque of $4.92$\,Nm and lateral force of $4.55$\,kg, comparable to a human wrist's strength~\cite{seo2008wrist} and sufficient for industrial applications~\cite{nesbitt2011handbook}, on top of hovering. We demonstrate these in valve turning, wall perching and nail pressing, and whiteboard pushing. 
\textbf{(c) Resiliency:} \morphable maintains accurate and stable hover at any attitudes under wind disturbances ---even when the wind is directed at a single rotor system.\footnote{We apply an airflow from a wind blower to the vehicle ($30$\,m/s measured at the vehicle), producing non-uniform force and torque disturbances at a small area on the vehicle; it challenges the vehicle stability, comparable to the $10$--$20$\,m/s uniform freestream generated by a wind tunnel~\cite{oconnell2022neuralfly} or fan array~\cite{huang2023datt}.} Under impulsive and sustained physical displacements, \morphable maintains an RMSE similar to free flight on the orthogonal axes to the disturbances.

Key to our approach are \replytoall{novel designs of hardware and control algorithms}.
\replytoone{On hardware, we independently articulate each of the four rotor systems via two-axis gimbals capable of multiple revolutions: the articulation maximizes thrust vectoring in any direction without control singularity (\Cref{eq:compact}),
preserving the energy-optimality of the minimum-norm thrust allocation~\cite{michael2010grasp}. On control, we introduce an almost-everywhere stable controller, composed of a generalized geometric control and a thrust allocation that accounts for actuation challenges. To this end, we exploit the null space of the input matrix (\Cref{eq:pseudoinverse-general}) such that our allocation redistributes the rotor thrusts without altering the wrenches commanded by the geometric control, permitting transient inter-rotor thrust cancellations to avoid gimbal lock and inter-rotor downwash interference.}
\newpage
\newpage
\begin{figure}[t!]
    \centering
    \includegraphics[width=1.0\textwidth]{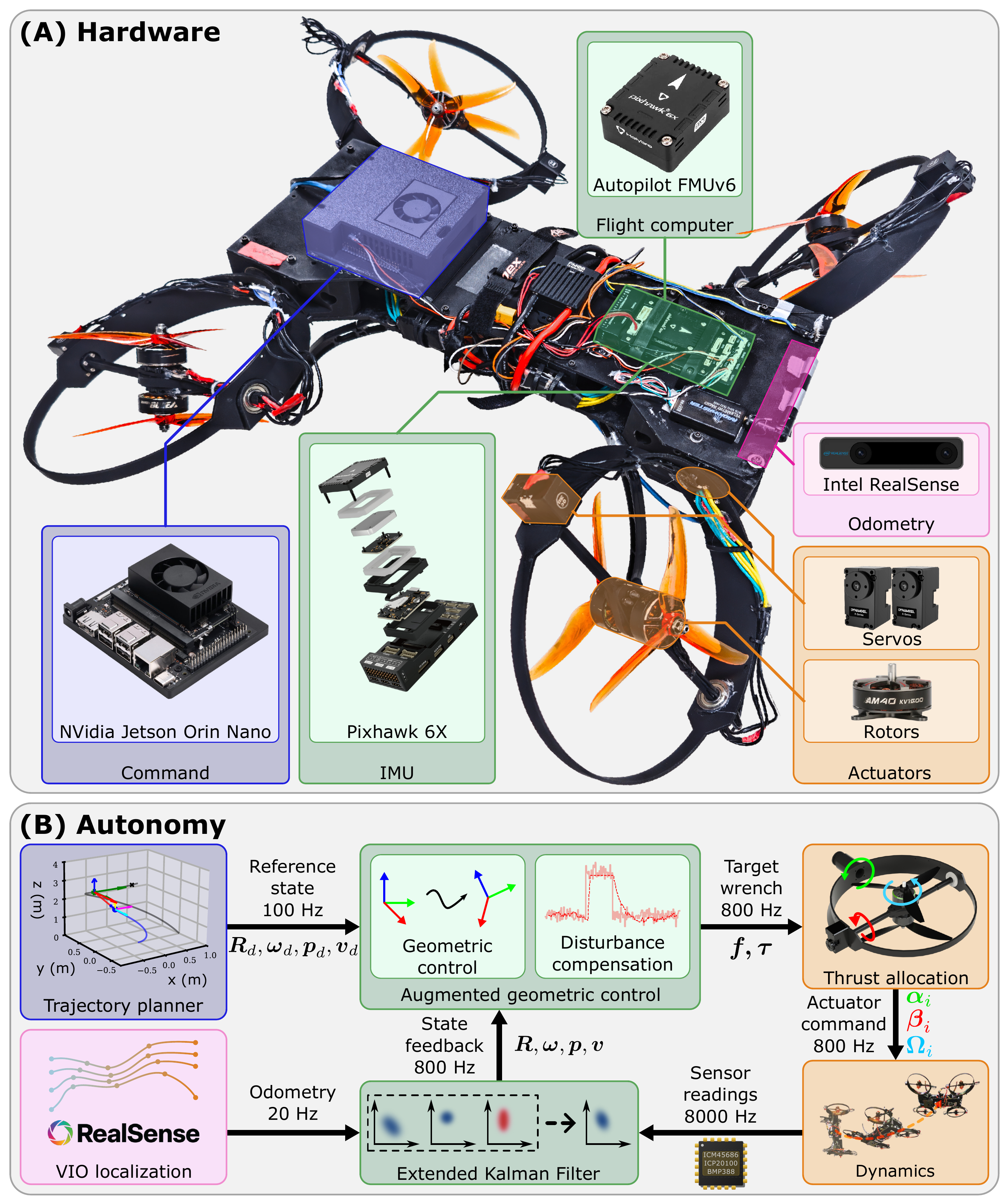}
    \caption{\textbf{(A)} Major hardware components on \morphable that enable omnidirectional flight. \textbf{(B)} The autonomy pipeline of \morphable, color-coded by its corresponding hardware in (A).}
    \label{fig:overview}
\end{figure}
\clearpage

\section*{Results}
    \label{sec:results}
    First, we discuss our contributions on hardware and autonomy that embody Enablers I--III, and then showcase the hardware experiment setup and evaluation results.

    \subsection*{Hardware and autonomy}
        We present \morphable's (A) hardware and (B) autonomy pipeline, as shown in~\Cref{fig:overview}, to realize Enablers I--III. Replacing a standard quadrotor's fixed rotors with four independently articulated rotor systems allows \morphable to vector maximum thrust in any direction (\textbf{Enabler I}). We deploy an almost-everywhere stable controller, composed of a generalized geometric control and a thrust allocation that permits thrust cancellations when necessary to avoid the gimbal lock and steer rotors clear of inter-rotor downwash interference (\textbf{Enabler II}). We retain the miniaturizable quadrotor structure (\textbf{Enabler III}), and integrate an off-the-shelf vision-based navigation architecture for \replytotwo{fully-onboard computation, localization, and trajectory execution}.

        We first present the independently articulated rotor system that enables continuous omnidirectional thrust vectoring, and then the autonomy pipeline that enables fully-onboard trajectory generation, localization, and motion control for the experiments. Additional details on mechatronic component selection and control synthesis are given in \nameref{sec:materials}.

        \subsubsection*{Independently articulated rotor systems}
            We enable the continuous omnidirectional thrust vectoring on \morphable by mounting each rotor system on a gimbal mechanism (\Cref{fig:actuator-stack}(A)) that is actuated across two perpendicular articulation axes. The gimbals can continuously vector the rotor thrusts into any desired direction inside a sphere. Each rotor system, in particular, is composed of two counter-rotating motors with the same speed, and appropriate propellers that together generate thrust in the same direction. This design choice eliminates gyroscopic precession of the rotating motor masses and the aerodynamic drag effects from the propellers. Next, we elaborate on the gimbal design and rotor system.

            \newpage
            \begin{figure}[t!]
                \centering
                \includegraphics[width=1.0\linewidth]{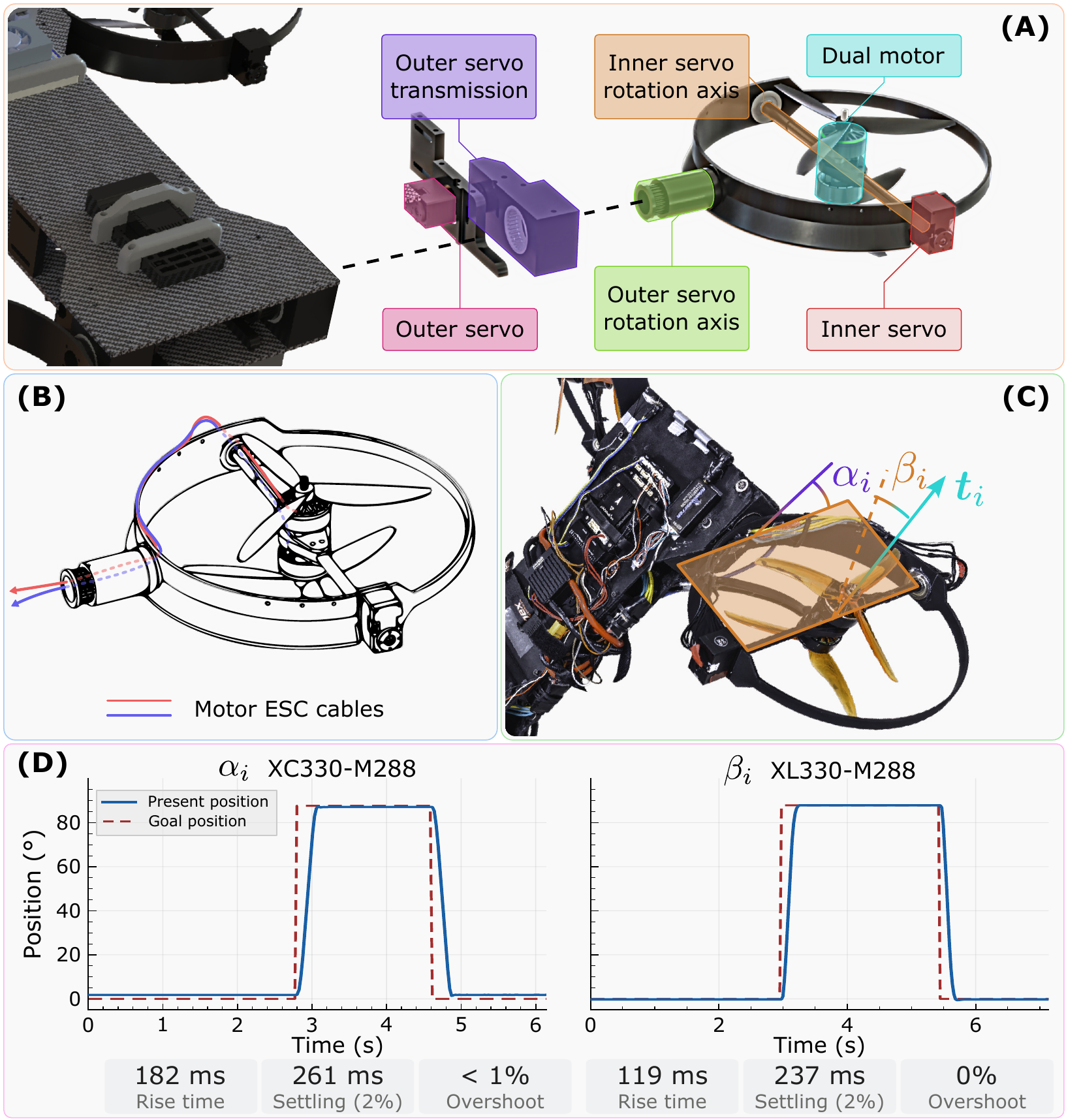}
                \caption{Each rotor system pairs two stacked motors with a two-axis gimbal, pointing the thrust in any direction while both axes spin continuously. \textbf{(A)} The inner servo rotates the motor pair inside the duct; the outer servo rotates the ring through a belt-pulley drive. \textbf{(B)} The belt-pulley drive moves the outer servo off the rotation axis, leaving the center open for wires. \textbf{(C)} Two angles describe the thrust direction: $\alpha_i$, the rotation of the duct, and $\beta_i$, the rotation of the motor pair inside the duct. \textbf{(D)} Servo response to a $90^\circ$ step command \replytotwo{under zero-thrust load}.}
                \label{fig:actuator-stack}
            \end{figure}
            \clearpage

            \paragraph{Gimbals.} The gimbal mechanism enables each rotor system to rotate up to $\pm 1080^\circ$ along two perpendicular axes (\Cref{fig:actuator-stack}(A)), directing the rotor thrust on the sphere through three full revolutions from neutral (\Cref{fig:actuator-stack}(C)). This range is possible because we route the power and signal wires through an opening at the center of the outer servo rotation axis, and enable the resulting twist through slack along the wire length (\Cref{fig:actuator-stack}(B)). A belt-pulley transmission offsets the outer servo from the rotation axis, leaving the center clear for the pass-through; the inner servo sits inline with its axis and requires no such offset.\footnote{Slip rings could be used instead to enable unlimited rotation along both axes.
            }
            
            \paragraph{Rotor systems.}  
            At the intersection of the two rotation axes of each gimbal, we mount a coaxial motor pair to provide thrust. Each propeller rotates in the opposite direction at equal speeds (\Cref{fig:actuator-stack}(A)). The counter-rotation benefits the thrust vectoring and controller design by eliminating two adverse effects that arise from a singular rotating propeller: (i) gyroscopic effect: a spinning rotor produces a precession torque that resists gimbal motion; and
            (ii) aerodynamic drag: a singular rotating propeller receives a reaction torque from the air along the spin axis that is inseparable from thrust generation. 
            Without counter-rotation, the servo motors would need to overcome the dynamic load from (i) during thrust vectoring, and each rotor's thrust would couple with a torque from (ii). Furthermore, the aerodynamic drag would complicate \morphable's control and thrust allocation design, preventing Enabler I. The dual-motor solution eliminates both the gyroscopic effect and the aerodynamic drag, leading to rapid and consistent servo motion (\Cref{fig:actuator-stack}(D)) \replytotwo{regardless of the thrust load} and low-complexity thrust allocation.
        
        \subsubsection*{Autonomy pipeline}
            We introduce the autonomy pipeline for \morphable, composed of \replytotwo{fully-onboard computation, localization, and trajectory execution}, as outlined in \Cref{fig:overview}(B). The autonomy pipeline enables continuous omnidirectional vectoring of maximum thrust, almost-everywhere stable motion and force application, and localization without external motion capture or off-board computation. We inherit the standard PX4 quadrotor architecture and configure its perception, control, and thrust allocation modules for the localization and navigation of \morphable. The autonomy pipeline transforms the high-level setpoint commands into rotor and gimbal commands via (a)~a trajectory planner that generates position and attitude trajectories based on the time-stamped setpoints we define along a path for a task; 
            (b)~a generalized geometric control that computes the force and torque required for \morphable to track the planned trajectories; (c)~a thrust allocation algorithm that realizes the required force and torque by calculating the rotor speed and gimbal angles of each independently articulated rotor system; and (d)~a state estimator that fuses camera and IMU measurements to enable localization in GPS-denied environments, free of a motion capture system.
            
            \paragraph{Trajectory planner.}
                We implement a trajectory planner to convert user-defined position and attitude setpoints into a smooth trajectory for \morphable to track since we design each task, including manipulations, with a series of time-stamped setpoints.
                In contrast to planners for regular quadrotors that consider coupled translation and rotation, our planner decouples them thanks to \morphable's ability to generate translational force at any orientation using independently articulated rotor systems. We use quintic-polynomial trajectories that guarantee continuous accelerations and smooth velocities~\cite{mellinger2011minimum} to connect the timed setpoints. We script the setpoints on the onboard companion computer (Jetson) and send the converted trajectory snapshots to the flight controller (Autopilot), in the form of reference states ($\boldsymbol{R}_d$, $\boldsymbol{\omega}_d$, $\boldsymbol{p}_d$, $\boldsymbol{v}_d$) at a frequency of $100$\,Hz.

            \paragraph{Generalized geometric controller.}
                Our generalized geometric controller computes the desired force and torque for \morphable to track the reference states provided by the trajectory planner; it leverages the omnidirectional thrust vectoring of \morphable to attract the vehicle to any arbitrary desired state. In contrast to a geometric controller on regular quadrotors~\cite{lee2010geometric} that only computes the thrust perpendicular to the body plane, we allow the geometric controller to independently command a thrust force spanning all three translational axes, on top of a torque spanning all three rotational axes. The generalization grants almost-everywhere exponential stability: \morphable can track trajectories across independent translation in $\mathbb{R}^3$, and at the same time, the rotation on the entire $\mathsf{SO}(3)$ manifold with only zero-measure exceptions, without the orientation-dependent translation seen on regular quadrotors.
                Because \morphable vectors thrust omnidirectionally, a disturbance from any direction is matched by an opposing control input: the controller rejects it by vectoring thrust in the counteracting direction, without affecting actuation in axes orthogonal to the disturbances.
                We derive the full generalized controller in \nameref{sec:materials}.
            
            \paragraph{Thrust allocation.}
                Our thrust allocation converts force and torque commands from the controller into the angular velocities ($\Omega_i$) of the rotor systems, and the angular positions ($\alpha_i$, $\beta_i$) of the gimbal servos. The allocation drives \morphable's four independently articulated rotor systems by solving a minimum-norm linear inversion, and departs from the minimum-norm solution only where that solution would destabilize flight: when rotors direct their downwash into each other, invalidating the thrust model, or when the gimbals approach mechanical gimbal lock, saturating the servo motion; without this correction, \morphable would not sustain flight stability during a $360^\circ$ revolution in roll as shown in \Cref{fig:nullspace-ablation}. We implement the allocation within the existing PX4 mixer framework. We derive the full allocation in \nameref{sec:materials}.
            
            \paragraph{State estimation.}
                Our state estimator provides state feedback of the vehicle's current position and orientation ($\boldsymbol{R}$, $\boldsymbol{\omega}$, $\boldsymbol{p}$, $\boldsymbol{v}$) at $800$\,Hz by fusing visual-inertial odometry with inertial measurements using an Extended Kalman Filter. The state feedback enables the generalized geometric controller to compute the desired force and torque for trajectory tracking. We choose commercially available sensors, a stereo camera for visual-inertial odometry, and an off-the-shelf IMU suite, to maintain the compact standard design. 

    \subsection*{Experiment setup}
        We evaluate \morphable's hardware and autonomy in delivering maneuverability, manipulation, and resiliency (MMR). We design three groups of experiments (\Cref{fig:intro}): (a)~maneuverability, where \morphable performs multi-revolution continuous rotation and pointing, while either translating to inspect a pipe, or hovering to track a hand movement; (b)~manipulation, where \morphable turns a valve, nails a target, and pushes an object through contact-based force application; and (c)~resiliency, where \morphable rejects winds and physical-displacement disturbances. 
        \morphable completes all tasks with \replytotwo{fully-onboard computation, localization, and trajectory execution}, without motion capture or GPS. We next introduce the setup of each task, including objects in the environment, additional onboard equipment, and external devices. Then, we elaborate the evaluation metrics and an overview of \morphable's performance in these experiments.
        
        \paragraph{Maneuverability experiments.}
            The pipe-inspection workspace contains two horizontal parallel pipes (\Cref{fig:maneuver}(A)) with a vertical gap of $1.3$ times \morphable's side length, forcing tight position tracking for collision-free traversal. The hand-tracking workspace has a human operator standing $1.5$\,m in front of \morphable (\Cref{fig:maneuver}(B)); the operator extends one hand toward \morphable's onboard camera, and \morphable hovers in place, tracking the hand's pointing direction and matching its roll tilt in real time while the operator moves the hand.

        \paragraph{Manipulation experiments.}
            For the valve-turning task, we mount a valve on a wall $3$\,m above the ground, with its $30$\,cm-long axis extruding outward at $45^\circ$ from the wall (\Cref{fig:manipulation}(A)); such a valve would otherwise require a human operator on a ladder to operate. We equip \morphable with a rigid ``Y''-shaped passive end-effector at its front that enters the valve's spokes to turn it. For perching and nailing, we mount a styrofoam block on a wall $2$\,m above the ground; we attach a nail perpendicularly to \morphable's bottom via a neodymium magnetic connector that releases cleanly after insertion (\Cref{fig:manipulation}(B)). For pushing, we place a wheeled whiteboard weighing $30$\,kg unrestrained on the floor (\Cref{fig:manipulation}(C)); \morphable needs to perch on the whiteboard to push it. 

        \paragraph{Resiliency experiments.}
            For wind disturbances, we direct a leaf blower at \morphable with its nozzle within $30$\,cm of \morphable's front face, producing an airflow of $30$\,m/s measured at the same front face (\Cref{fig:disturbance}(A)). Unlike a wind tunnel~\cite{oconnell2022neuralfly} or fan array~\cite{huang2023datt} that generates a uniform freestream, this setup concentrates peak airflow at a small incidence area, producing spatially non-uniform and time-varying force and torque disturbances that simultaneously load translation and rotation ---a regime distinct from, not easier than, a uniform freestream. For impulsive physical displacements, a human operator wields a hammer to apply the physical displacement; the hammer head makes direct contact with \morphable's body, producing lateral pushes and off-centered picks (\Cref{fig:disturbance}(B)). For sustained disturbances, the operator applies a pulling force via a tether connected to \morphable through an inline force meter (\Cref{fig:disturbance}(C)).

        \paragraph{Task implementation and evaluation metric.}
            We define task-specific setpoints of each task for the trajectory planner and record position and orientation tracking errors throughout. Because \morphable achieves localization from onboard visual-inertial odometry, we report all tracking errors in the world frame as estimated by the onboard state estimator. We do not evaluate the accuracy of contact force and torque since neither the trajectory planner commands contact force or torque, nor is the vehicle equipped with the necessary sensors. Specifically, for manipulation tasks, \morphable relies on position and attitude references by the planner, \replytothree{and maintains accurate trajectory tracking along directions orthogonal to the interaction}, upon treating any counteracting forces and torques as disturbances.

        \paragraph{Evaluation summary.}
            \textbf{(a) Maneuverability}: \morphable demonstrates simultaneous rotation and translation, including $720^\circ$ continuous rotation.
            The trajectory tracking maintains a position RMSE below $6$\,cm and orientation RMSE below $5^\circ$.
            \textbf{(b) Manipulation}: \morphable demonstrates omnidirectional force and torque application, including a lateral force generation comparable to its takeoff mass ($4.56$\,kg) and its torque application comparable to a human wrist ($4.92$\,Nm). Upon the completion of tasks, we report a position RMSE below $5$\,cm and an attitude RMSE below $3^\circ$ along axes that are not affected by the interaction.
            \textbf{(c) Resiliency}: \morphable maintains similar motion accuracy across different hovering attitudes amid winds and push/pull/pick disturbances, even when a $30$\,m/s wind gust is directly fed to one of its propellers. \morphable achieves a position RMSE of approximately $8$\,cm along the dominant wind axis, and the orientation RMSE below $5^\circ$. \morphable maintains its position RMSE $2$--$4$\,cm and orientation RMSE below $5^\circ$ on axes orthogonal to physical displacements.

        \newpage
        \begin{figure}[t!]
            \centering
            \includegraphics[width=1.0\linewidth]{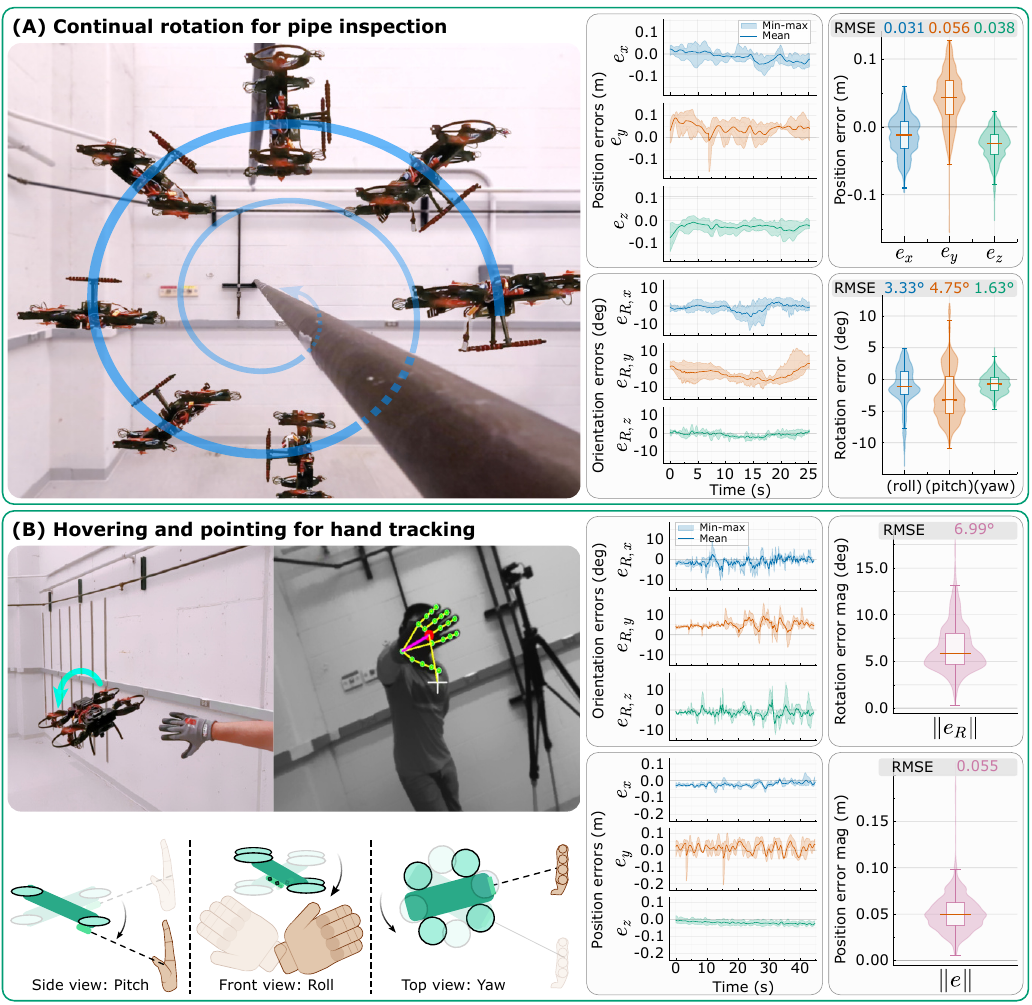}
            \caption{Maneuverability experiments. \textbf{(A)} Continuous rotation during pipe inspection:~\morphable rotates around a pipe continuously for $720^\circ$ following a circular translation trajectory, its heading direction always pointing at the pipe. \textbf{(B)}~Hovering and pointing for hand tracking: \morphable tracks a human operator's hand orientation from a fixed hover point using onboard vision, independently commanding roll, pitch, and yaw while maintaining position.}
            \label{fig:maneuver}
        \end{figure}
        \clearpage
        
    \subsection*{Maneuverability: Simultaneous translation and continuous multi-revolution}
        We demonstrate \morphable's motion stability and accuracy during omnidirectional flight, including simultaneous translation and continuous multi-revolution rotation. In particular, \morphable performs (a)~pipe inspection via a continuous $720^\circ$ pitch rotation around a pipe while keeping the heading direction towards the pipe; it emulates practical maintenance applications where the robot needs to traverse between pipes and keep onboard sensors pointing towards a target and (b)~real-time hand tracking using onboard vision while hovering; it showcases \morphable's quick and accurate response to dynamic and unpredictable target states.

        \subsubsection*{Continuous rotation for pipe inspection} 
            \morphable executes the pipe-inspection task by simultaneously translating along a circle with a $1$\,m diameter in a vertical plane around a horizontal pipe and pitching $720^\circ$ continuously, as shown in \Cref{fig:maneuver}(A). The narrow inter-pipe gap demands accurate position tracking for collision-free operation despite the excessive rotation. \morphable achieves position errors below $6$\,cm RMSE on all three axes and pitch errors at $5.5^\circ$ RMSE across the full trajectory.

        \subsubsection*{Hovering and pointing for hand tracking}
            Dynamic and unpredictable setpoints challenge the controller and hardware under conditions closer to real-world pointing tasks than pre-scripted trajectories (\Cref{fig:maneuver}(B)). We implement an onboard hand-orientation detector to estimate the operator's hand pose from the front camera feed and send orientation setpoints to the flight controller, so that \morphable follows the hand pose: pitch and yaw authority allow the vehicle to point in any direction, effectively sweeping the camera over a full sphere, while roll commands keep the hand upright in the camera frame. 
        
            Despite the dynamic hand movements, \morphable maintains a position RMSE of $5.5$\,cm during hand tracking, confirming that the position controller remains decoupled from the orientation commands even under reactive setpoints. The orientation tracking RMSE is $6.99^\circ$, with the error distribution concentrated between approximately $3^\circ$ and $8^\circ$. Occasional outliers exceeding $15^\circ$ correspond to rapid hand accelerations that momentarily saturate the visual servo reaction, accounting for the higher orientation error relative to experiment~(a).
        
    \subsection*{Manipulation: Turning valves, perching and nailing, and pushing objects}
        We demonstrate \morphable's capability of force and torque application ---at a similar level to a human wrist--- in arbitrary directions. In particular, we design three tasks of distinct objectives (\Cref{fig:manipulation}): 
        (a)~Turning a valve wheel; it requires torque application on top of motion control.
        (b)~Perching on a wall and pressing a nail into a surface; it requires sustained directional force application on top of motion control.
        (c)~Pushing a heavy wheeled object; it requires sustained force application against an unrestrained moving surface, challenging the control resiliency. 
        We remark that all manipulation tasks remain open-loop with respect to contact: \morphable uses neither force/torque sensors, nor visual servoing during manipulation; instead, the trajectory planner only prescribes position and attitude setpoints. The vehicle follows the smooth trajectory that connects the setpoints upon treating all counteracting contact forces and torques as disturbances. 
        
        \subsubsection*{Valve turning}
            \morphable approaches the valve at a $45^\circ$ pitch to align the end-effector with the spokes of the valve wheel. Once engaged, \morphable locks its position and rotates about the valve axis to complete the turning, as shown in \Cref{fig:manipulation}(A). As wheeled valves are common infrastructure control mechanisms, this task highlights the potential of \morphable in maintenance automation. 
            
            During task execution, \morphable simultaneously holds position and rolls aligned with the valve axis to generate a torque that rotates the wheel. The experiment is demanding since \morphable operates close to the wall, on top of the unknown torque needed to turn the valve, the aerodynamic wall effects perturb the airflow and risk the position control accuracy; consequently, a high position error would cause collision between \morphable and the surrounding structures.

            Despite the challenges, \morphable maintains position control throughout the valve turning maneuver, with position RMSE at 4\,cm along the z-axis and below 3\,cm along the x- and y-axes. Orientation RMSEs are $3.9^\circ$ in roll, $7.7^\circ$ in pitch, and $2.4^\circ$ in yaw. We attribute the elevated pitch error to the added mass of the front-mounted end-effector: the added mass induces a nose-down moment not directly accounted for in the geometric controller since we employ \morphable's nominal model without the front-mounted end-effector. We record a maximum body torque of $4.92$\,Nm in pitch (and by symmetry in roll) during hover, confirming that the rotor system delivers torque authority that is comparable to human wrists~\cite{seo2008wrist} and satisfies industrial specifications~\cite{nesbitt2011handbook}.

            \subsubsection*{Perching and nailing}
                \morphable approaches the wall while continuously rotating its body, transitioning from nominal hover to a wall-parallel pose; once the nail aligns with the target, the controller applies a sustained force toward the wall to perch and press the nail into the styrofoam, as shown in \Cref{fig:manipulation}(B); finally, \morphable moves up to break the magnetic connection and retreats from the wall. This task demonstrates the application of a sustained force while holding an arbitrary pose that promises \morphable in versatile manipulation.
                
                \morphable maintains a tracking error similar to hover during the approach and rotation phase (\Cref{fig:manipulation}(B)). Upon contact, noticeable oscillation appears as the nail resists insertion. We attribute this oscillation to (i) compliance in the landing legs leads to vibration at impact with the wall, and (ii) the close proximity of the propellers to the wall induces turbulence that disturbs the rotor thrust. Despite these disturbances, the overall position RMSE remains approximately $5$\,cm across all three axes, with orientation RMSEs of $2.4^\circ$, $2.9^\circ$, and $2.3^\circ$ in roll, pitch, and yaw, respectively. 

            \newpage
            \begin{figure}[t!]
                \centering\includegraphics[width=1.0\textwidth]{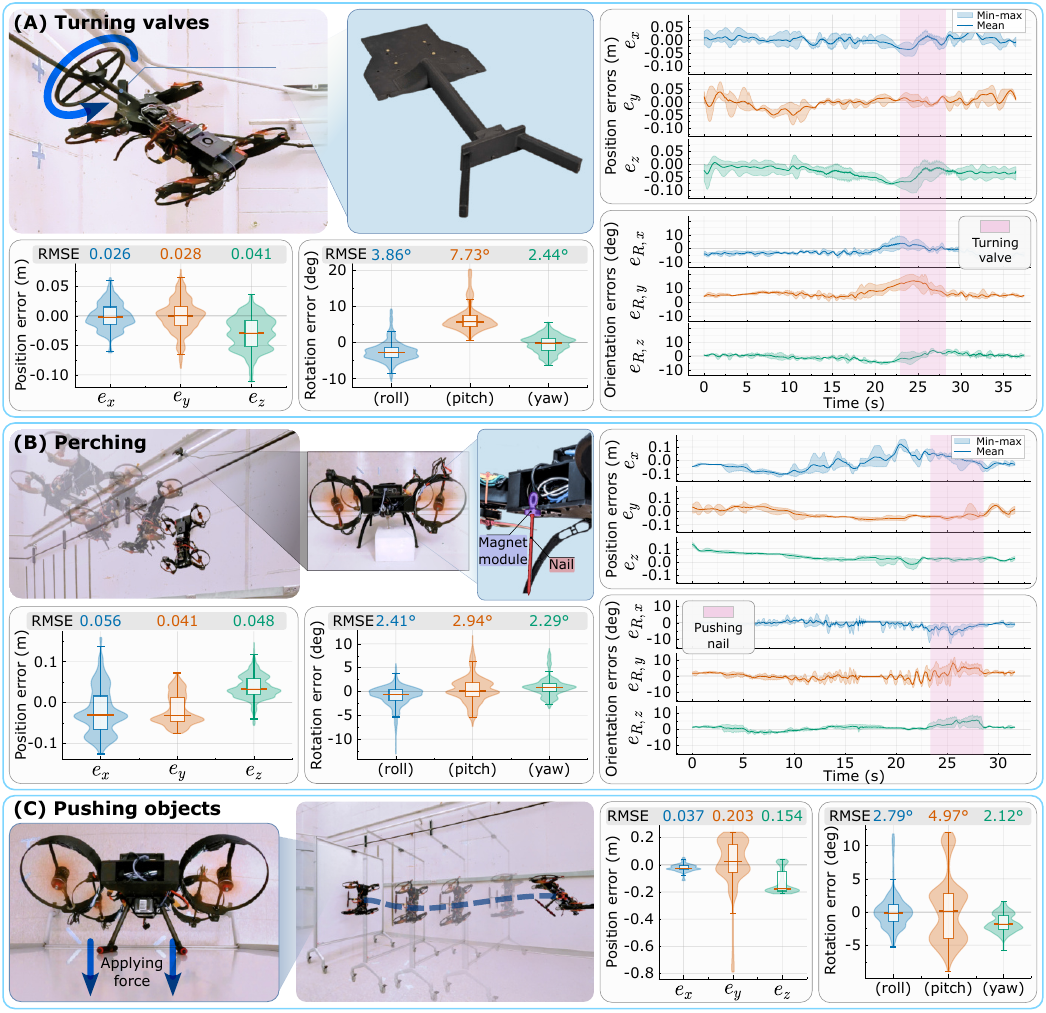}
                \caption{Manipulation experiments. \textbf{(A)}~Valve turning: \morphable approaches a wall-mounted valve and rotates the wheel by generating body torque. \textbf{(B)}~Perching and nailing: \morphable transitions from nominal hover to a wall-parallel pose and presses a magnetically attached nail into a styrofoam target. \textbf{(C)}~Pushing objects: \morphable perches onto a wheeled whiteboard and displaces it across the floor.}
                \label{fig:manipulation}
            \end{figure}
            \clearpage
        
        \subsubsection*{Pushing objects}
            We task \morphable to push the whiteboard across the floor for around $1$\,m (\Cref{fig:manipulation}(C)), emulating a door-opening or obstacle-clearing action in search-and-rescue. Unlike pressing against a rigid wall, \morphable perches onto the whiteboard and drives it across the floor, requiring the controller to maintain consistent contact against a surface that moves. Wall effects persist throughout the interaction. Because the manipulation pipeline currently operates without feedback on contact force, we feedforward a constant body-frame acceleration after perching to sustain the push.
            
            We report, in \Cref{fig:manipulation}(C), a noticeably higher position error along the y-axis than in other experiments. Since \morphable converts the desired pushing force into a constant body-frame acceleration without updating the position setpoint, the approximately $0.8$\,m of board displacement accumulates directly as reported y-axis error. The remaining axes exhibit considerably smaller errors, with x-axis RMSE at approximately $3$\,cm and z-axis RMSE at roughly $15$\,cm. We attribute the elevated z-axis error to intensified wall effects and contact friction from sustained close proximity to the whiteboard surface. The controller maintains stable attitude tracking throughout the push, with orientation RMSEs of approximately $2.8^\circ$ in roll, $5.0^\circ$ in pitch, and $2.1^\circ$ in yaw, confirming that sustained contact does not compromise attitude stability.

    \subsection*{Resiliency: Disturbance rejection under winds and physical displacements}
        We demonstrate \morphable's resilient operation under challenging disturbances ---including a $30$\,m/s ($67$\,mph or $108$\,km/h) wind stream directed into a single rotor system, applied across multiple hover attitudes.
        In particular, we design three types of disturbances that span real-world flight (\Cref{fig:disturbance}):
        (a)~Hovering at different attitudes under sustained winds from time-varying directions; it demonstrates \morphable's control resiliency under aerodynamic disturbances, the most common real-world stressor for aerial vehicles.
        (b)~Impulsive push and pick disturbances; it demonstrates transient recovery from collisions and the decoupling of translational and rotational control.
        (c)~Sustained tethered pull; it demonstrates motion accuracy under continuous load.

        \newpage
        \begin{figure}[t!]
            \centering
            \includegraphics[width=1.0\linewidth]{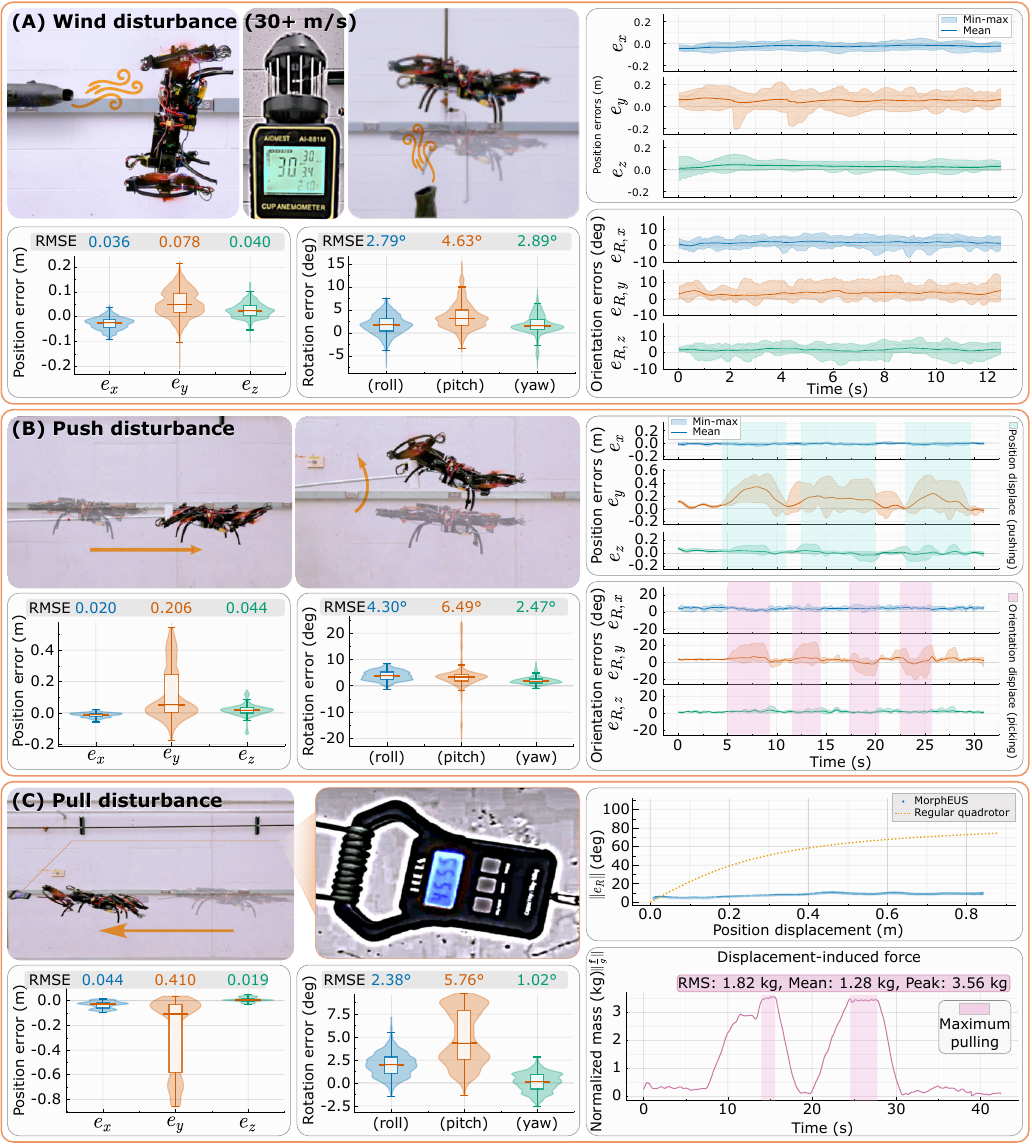}
            \caption{Resiliency experiments. \textbf{(A)}~Wind: \morphable holds a hover setpoint under time-varying leaf-blower output. \textbf{(B)}~Push and pick: \morphable endures impulsive translational and rotational displacements. \textbf{(C)}~Pull: \morphable reacts to sustained pulling force.}
            \label{fig:disturbance}
        \end{figure}
        \clearpage
        
        \subsubsection*{Strong varying wind fields}
            We design four hover scenarios that span the range of real-world wind encounters: abrupt onset from an off-axis direction, periodic sweeping across the front face of \morphable, abrupt onset while the quadrotor hovers in a vertical ($90^\circ$ pitch) pose exposing its full lateral cross-section (\Cref{fig:disturbance}(A)), and direct blowing into a single propeller creating highly asymmetric aerodynamic interference. We sustain each scenario for 30 seconds while \morphable hovers at a fixed setpoint.
            
            \morphable maintains stable hovering at different attitudes despite the winds. Because the wind is predominantly oriented along the y-axis of \morphable, the largest positional displacement appears in that direction with an RMSE of approximately 8\,cm, while the x- and z-axes remain closer to 4\,cm. The controller maintains strong attitude tracking throughout, with orientation RMSEs of $2.8^\circ$ in roll, $4.6^\circ$ in pitch, and $2.9^\circ$ in yaw. 
            These results validate the flight resiliency of \morphable under unknown, strong, and time-varying aerodynamic disturbances.

        \subsubsection*{Impulsive push and pick disturbances}
            We evaluate \morphable's ability to recover from sudden displacements by applying disturbances at irregular intervals. We apply two types of disturbances: lateral pushes to the quadrotor body that induce translational displacements, and off-centered vertical pushes from below (picks) that simultaneously displace \morphable and impart a rotational disturbance.
            Physical collisions are an unavoidable reality in cluttered environments and during contact-based tasks.
            
            Push disturbances along the y-axis displace the quadrotor by as much as 50\,cm from its setpoint (\Cref{fig:disturbance}(B)). Despite these large translational excursions, the controller holds the orthogonal axes with errors comparable to maneuver tasks (x-axis RMSE: 2\,cm; z-axis: 4\,cm). Similarly, although pitch reaches over $20^\circ$ of disturbance in both directions, the controller maintains roll and yaw at $4.3^\circ$ and $2.5^\circ$ RMSE, respectively. The time-series data confirm decoupled recovery: large positional displacements do not propagate into rotational errors, and large rotational disturbances do not degrade translational tracking. 
            
        \subsubsection*{Sustained pull disturbances}
            We quantify \morphable's maximum disturbance rejection capability by subjecting it to sustained pulling forces. The operator gradually increases the tension until \morphable reaches its maximum sustainable resistance. The force sensor records a peak load of 4.55\,kg, while the disturbance force estimated from the position error and controller state reaches 3.56\,kg. The 0.99\,kg discrepancy between the two measurements is attributed to online disturbance rejection that counteracts a portion of the pulling force before it manifests as position error.
            
            Consistent with the push disturbance results, a large y-axis displacement exceeding 80\,cm does not propagate into the orthogonal axes (\Cref{fig:disturbance}(C)): the x-axis RMSE remains at 4.4\,cm and the z-axis at 2\,cm, while the controller holds roll and yaw at $2.4^\circ$ and $1.0^\circ$, respectively. The pitch RMSE of $5.8^\circ$ reflects the slight upward angle of the tether pull.
            
            We contextualize this result through a scatter plot in \Cref{fig:disturbance}(C) comparing the rotational error induced by translational displacement for \morphable against a simulated standard quadrotor under identical tether loading conditions. For \morphable, increasing positional displacement produces virtually no growth in rotation error, confirming decoupled disturbance rejection. The simulated conventional quadrotor exhibits rapidly growing rotation error with displacement because it must pitch or roll to generate the lateral thrust needed to counteract the positional disturbance. This comparison quantitatively illustrates a fundamental advantage of \morphable's independently articulated rotor architecture: \morphable rejects external forces through independent thrust vectoring, whereas a standard quadrotor shows coupled translational and rotational responses.
            
\newpage
\section*{Discussion}

\paragraph{Summary of results.}
\morphable simultaneously achieves Enablers I--III through hardware designs and control algorithms as described in Contributions. We validate \morphable, with \replytotwo{fully-onboard computation, localization, and trajectory execution} without motion capture, in MMR experiments.
\textbf{(a) Maneuverability:} 
\morphable rotates continuously for $720^\circ$ while translating around pipes, and points while hovering to track a hand.
\textbf{(b) Manipulation:} 
MorphQuad generates torque of $4.92$\,Nm and lateral force of $4.55$\,kg on top of hovering ---comparable to a human wrist~\cite{seo2008wrist} and sufficient for industrial tasks~\cite{nesbitt2011handbook}--- enabling it to turn an inclined valve, perch on a wall to press a nail, and push heavy objects.
\textbf{(c) Resiliency:} \morphable holds against $30$\,m/s wind from any direction ---including wind directed into a single rotor system--- impulsive pushes, and sustained pulls.
Across all tasks, \morphable holds RMSE of position within $2$--$5$\,cm and orientation within $3$--$6^\circ$ along undisturbed axes.

\paragraph{Significance.}
\morphable promises to enable autonomous infrastructure maintenance, contact-based inspection, and emergency response with a single aerial platform. \textbf{Infrastructure inspection and maintenance}: The pipe inspection 
demonstrates the simultaneous translation and continuous multi-revolution rotation needed for practical infrastructure inspection. The valve turning 
demonstrates the human-wrist-level torque needed to operate industrial mechanisms at hard-to-reach sites as both workflows today rely on scaffolding, rope access, or a human operator on a ladder. \textbf{Contact-based inspection}: The perching-and-nailing 
demonstrates the sustained directional human-wrist-level force needed to press sensors against infrastructure surfaces for integrity assessment, regardless of surface orientation. \textbf{Emergency response}: The object pushing, alongside resilience to wind, impulsive pushes, and sustained pulls,
demonstrates the obstacle-clearing capacity comparable to a human wrist; it also demonstrates disturbance robustness needed in cluttered, wind-exposed, and collapsed environments. All results are obtained with \replytoone{fully-onboard computation, localization, and trajectory execution} ---no motion capture, no GPS, no off-board computation--- removing the constraint that has confined prior omnidirectional multirotor demonstrations to laboratory settings, and aligning \morphable with the unstructured environments these applications demand.

\paragraph{Future work.} We will extend \morphable along (a) robust, adaptive, and agile flight, (b) force-aware physical interaction, and (c) natural-language control. 
\textbf{(a) Robust, adaptive, and agile flight}: adding LiDAR-based SLAM~\cite{zheng2024fast, ren2025safety} to enhance \morphable's fully-onboard autonomy would let it navigate robustly across low-light, cluttered environments where cameras alone struggle. Combining online system identification with model-predictive control~\cite{zhou2025simultaneous} would provide \morphable with adaptive transitions between free flight and in-contact flight. Explicitly modeling the dynamics of motors and servos, for example through incremental nonlinear dynamic inversion (INDI)~\cite{tal2020accurate}, could sharpen the responsiveness of thrust vectoring and push \morphable toward acrobatic-level maneuvers. 
\textbf{(b) Force-aware physical interaction}: \morphable's resiliency already absorbs contact forces. Adding end-effector force and torque sensing~\cite{bodie2020active} together with soft manipulators~\cite{fishman2021soft} would allow precise interaction, enabling delicate aerial tasks such as inspection, assembly, and contact-rich maintenance. 
\textbf{(c) Natural-language control}: integrating vision-language-action models~\cite{tucker2026pi} would bring foundation-model manipulation ---common in fixed-base robotic arms--- into the air, letting operators direct \morphable through natural language.

\newpage
\section*{Materials and Methods}
\label{sec:materials}
\replytoone{We present the mechatronic components, dynamics, and control framework that enable \morphable to realize vectoring of maximum thrust in any direction and almost-everywhere stability, even accounting for gimbal lock and inter-rotor downwash interference, on a miniaturizable quadrotor design with fully-onboard computation, localization, and trajectory execution. We first identify the mechatronic components of the vehicle structure, actuation, and onboard computing. Then we derive the translational and rotational dynamics of \morphable. Finally, we present the almost-everywhere stable controller, composed of a geometric controller for singularity-free trajectory tracking on $\mathsf{SE}(3)$, and a energy-optimal thrust allocation that accounts for gimbal lock and downwash interference using the null-space of the input matrix.
We also enhance the disturbance compensation capabilities of \morphable via $\mathcal{L}_1$ adaptive augmentation.} Together, these components translate high-level $\mathsf{SE}(3)$ trajectory commands into individual rotor speeds and gimbal angles that demonstrate \morphable's maneuverability, manipulation, and resiliency.

\subsection*{Mechatronic design}
\label{sec:structure}
 
\paragraph{Main body.} We preserve the compact footprint of a conventional quadrotor by arranging the four rotor systems in an H-configuration on a main body composed of two parallel carbon-fiber-infused PLA plates (\Cref{fig:overview}(A)). The dual-plate design serves a structural and organizational role: the ESCs and power distribution electronics sit protected between the plates, the top plate houses the flight and task computers, and the bottom plate mounts the camera for visual-inertial odometry. This arrangement retains the geometric profile needed to access confined and narrow environments while housing the full actuation, sensing, and computation required for omnidirectional flight. The takeoff mass, including batteries, is $4.56$\,kg, and all electronic and mechanical components are commercially available off-the-shelf products.

\paragraph{Rotor systems.} Each rotor system pairs two coaxial counter-rotating rotors with a two-axis gimbal (\Cref{fig:actuator-stack}(A)): the counter-rotating pair cancels rotor-induced drag torque and gyroscopic precession, and the gimbal orients the pair to direct thrust in any direction. Each gimbal axis is driven by a DYNAMIXEL servo selected for its compact form factor and position feedback ---XL330-M288-T on the inner axis, XC330-M288-T on the outer--- and the eight servos daisy-chain over a single UART bus to the flight computer. Each coaxial pair consists of two brushless T-Motor VELOX 3008 1500KV motors driving 6-inch triple-blade propellers, sized for a rotor-pair thrust of $2.8$\,kg that lets \morphable hover at $40$\% throttle. Peak motor draw exceeds $43.6$\,A during fast maneuvers, which sets our choice of independent ESCs rated at $51$\,A continuous current.

\paragraph{Computing and simulation.} The onboard computing stack enables fully-onboard computation, localization, and trajectory execution without an external motion capture: a Pixhawk Autopilot FMUv6x flight controller runs attitude and position control, an NVIDIA Jetson Orin Nano runs trajectory generation, and an Intel RealSense stereo-vision depth camera supplies the VIO sensor stream~\cite{loianno2016visual,realsense}. To validate the full software stack prior to hardware deployment, we integrate \morphable's structural, kinematic, and dynamic models into a ROS and Gazebo software-in-the-loop (SITL) simulator.

\subsection*{Vehicle Dynamics}
\label{sec:dynamics}

The omnidirectional maximum thrust vectoring capability of \morphable originates from the gimbal-mounted coaxial rotor systems, whose servo-driven tilt angles appear directly in the equations of motion. We define four coordinate frames for the derivation: the world frame $\mathcal{W}$, the body-fixed frame $\mathcal{B}$, and a propeller-fixed frame $\mathcal{P}_i$ for each rotor arm $i \in \{1,2,3,4\}$. The frame conventions and servo angles are shown in \Cref{fig:frames}. By design, the $\mathcal{B}$-to-$\mathcal{P}_i$ transformation is parameterized by two servo positions: $\alpha_i$ controlled by the inner servo and $\beta_i$ by the outer servo, as shown in \Cref{fig:actuator-stack}(C) and \Cref{fig:frames}. These servo angles provide additional actuator commands alongside the propeller speeds, giving \morphable a total of twelve independent control inputs.

\newpage
\begin{figure}[t!]
    \centering
    \includegraphics[width=1.0\textwidth]{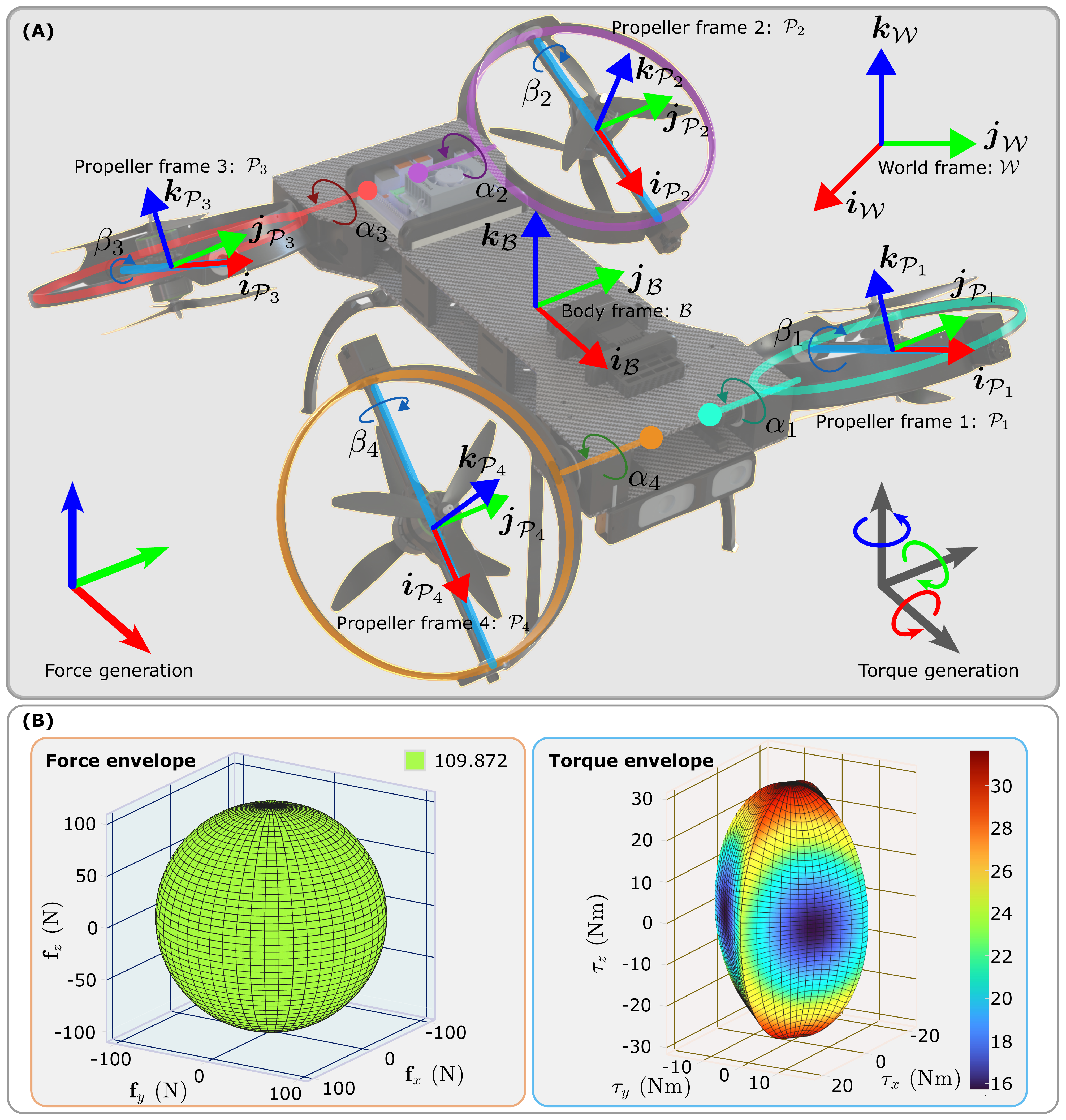}
    \caption{\textbf{(A)} Schematic of the quadrotor highlighting coordinate frames and servo angles. \textbf{(B)} Force and torque envelopes of \morphable, assuming that each arm can apply a maximum thrust of $24.5$\,N. Force envelope assumes desired torque to be zero, and vice versa. Both envelopes are obtained assuming no thrust cancellation enabled for inter-rotor downwash avoidance.}
    \label{fig:frames}
\end{figure}
\clearpage

\noindent
The translational dynamics are
\begin{equation}
    \dot{\boldsymbol{p}} = \boldsymbol{v}, \quad m \dot{\boldsymbol{v}} = m \boldsymbol{g} + \boldsymbol{R}\boldsymbol{f},
    \label{eq:translational_dynamics}
\end{equation}
where $\boldsymbol{p} \in \mathbb{R}^{3}$ and $\boldsymbol{v} \in \mathbb{R}^{3}$ are position and velocity in $\mathcal{W}$, $m$ is the quadrotor mass, $\boldsymbol{g}\in \mathbb{R}^{3}$ is the gravitational acceleration in $\mathcal{W}$, $\boldsymbol{R}\in \mathsf{SO}(3)$ is the rotation matrix from $\mathcal{B}$ to $\mathcal{W}$, and $\boldsymbol{f}\in \mathbb{R}^{3}$ is the total propeller force in $\mathcal{B}$.

The gimbal design enables the generation of rotor thrust, and thus the body forces $\boldsymbol{f}$, in any direction. Each rotor system produces a vectorized thrust whose direction is set by the servo angles $\alpha_i, \beta_i$ and whose magnitude scales with the propeller angular velocity $\Omega_i$. The body-frame force is
\begin{equation}
    \boldsymbol{f} = \sum_{i=1}^4 \boldsymbol{t}_i
    = c_{t} \sum_{i=1}^{4}\boldsymbol{G}_i(\alpha_i, \beta_i)\,\Omega_i^2,
    \label{eq:force_omega_relation}
\end{equation}
where $c_{t}$ is the thrust coefficient and the thrust direction vector
\begin{equation}
    \boldsymbol{G}_i(\alpha_i, \beta_i) = \begin{bmatrix}
    \sin\alpha_i \cos\beta_i  \\
    -\sin\beta_i \\
    \cos\alpha_i \cos\beta_i
\end{bmatrix}
\end{equation}
results from two intrinsic rotations of the gimbal: first by $\alpha_i$ about the $\boldsymbol{j}_{\mathcal{B}}$ axis (inner servo), then by $\beta_i$ about the rotated $\boldsymbol{i}_{\mathcal{B}}$ axis (outer servo), as shown in \Cref{fig:frames}.

The rotational dynamics follow Euler's equations,
\begin{align}
    \dot{\boldsymbol{R}} = \boldsymbol{R}[\boldsymbol{\omega}]_\times, \quad \boldsymbol{J} \dot{\boldsymbol{\omega}} = -\boldsymbol{\omega} \times \boldsymbol{J} \boldsymbol{\omega} + \boldsymbol{\tau}, \label{eq:rotational_dynamics}
\end{align}
where $\boldsymbol{\omega}\in\mathbb{R}^3$ is the angular velocity in $\mathcal{B}$, $[\cdot]_\times: \mathbb{R}^3\rightarrow\mathbb{R}^{3\times 3}$ is the skew-symmetric matrix operator, $\boldsymbol{J}\in\mathbb{R}^{3\times3}$ is the inertia matrix in $\mathcal{B}$, and $\boldsymbol{\tau}\in\mathbb{R}^3$ is the propeller torque in $\mathcal{B}$.

A key advantage of the coaxial design is that each pair of counter-rotating propellers cancels the aerodynamic drag torque and eliminates the gyroscopic reaction that would otherwise resist gimbal motion. The servos can therefore redirect thrust using minimal effort, and the net torque on the body reduces to the moment of each thrust vector about the center of mass (COM),
\begin{align}
    \boldsymbol{\tau} = \sum_{i=1}^{4} \boldsymbol{l}_i \times \boldsymbol{t}_{i}, \label{eq:torque_thrust_relation}
\end{align}
where $\boldsymbol{l}_i = [l_{i,x},\, l_{i,y},\, l_{i,z}]^\top$ is the position vector from the COM to the center of propeller $i$, and $\boldsymbol{t}_{i} = c_t \boldsymbol{G}_i(\alpha_i, \beta_i)\,\Omega_i^2$ is the thrust of propeller $i$ in $\mathcal{B}$. Substituting yields
\begin{equation}
    \boldsymbol{\tau} = c_t \sum_{i=1}^{4} \boldsymbol{H}_i(\alpha_i, \beta_i)\,\Omega_i^2,
    \label{eq:torque_omega_relation}
\end{equation}
where
\begin{equation}
    \boldsymbol{H}_i(\alpha_i, \beta_i) = \begin{bmatrix}
    l_{i,y} \cos\alpha_i \cos\beta_i + l_{i,z} \sin\beta_i \\
    -l_{i,x} \cos\alpha_i \cos\beta_i + l_{i,z} \sin\alpha_i \cos\beta_i \\
    -l_{i,x} \sin\beta_i - l_{i,y} \sin\alpha_i \cos\beta_i
\end{bmatrix}.
\end{equation}

Combining the force~\eqref{eq:force_omega_relation} and torque~\eqref{eq:torque_omega_relation} expressions gives the full wrench,
\begin{equation}
    \boldsymbol{w} = c_t\sum_{i=1}^4\begin{bmatrix}
        \boldsymbol{G}_i(\alpha_i, \beta_i) \\ \boldsymbol{H}_i(\alpha_i, \beta_i)
    \end{bmatrix}\Omega_i^{2} = c_t \boldsymbol{F}\boldsymbol{\Omega}^{\circ2},
    \label{eq:F_matrix} 
\end{equation}
where $\boldsymbol{w} = [\boldsymbol{f}^\top \;\; \boldsymbol{\tau}^\top]^\top \in \mathbb{R}^6$ is the stacked force-torque vector, $\boldsymbol{F}(\boldsymbol{\alpha}, \boldsymbol{\beta}) \in \mathbb{R}^{6 \times 4}$ stacks the per-rotor wrench-direction vectors $[\boldsymbol{G}_i^\top \;\; \boldsymbol{H}_i^\top]^\top$ as columns, $\boldsymbol{\Omega} = [\Omega_1 \;\; \Omega_2 \;\; \Omega_3 \;\; \Omega_4]^\top$, and $(\cdot)^{\circ2}$ denotes the element-wise square.

For control design, it is convenient to treat the individual thrust vectors $\boldsymbol{t}_i \in \mathbb{R}^3$ as abstracted inputs rather than the rotor speeds $\Omega_i$. Stacking all four thrust vectors into $\boldsymbol{t} = [\boldsymbol{t}_1^\top \;\; \boldsymbol{t}_2^\top \;\; \boldsymbol{t}_3^\top \;\; \boldsymbol{t}_4^\top]^\top \in \mathbb{R}^{12}$, the wrench can be written in compact form as
\begin{equation}
    \boldsymbol{w} = \boldsymbol{M}\boldsymbol{t},
    \label{eq:compact}
\end{equation}
where $\boldsymbol{M} \in \mathbb{R}^{6 \times 12}$ is a constant input matrix determined by the fixed rotor positions $\boldsymbol{l}_i$ in $\mathcal{B}$. Since the matrix is wide and always full row-rank, the input has control over all wrench channels: the actuators can always produce a compensating input in the same subspace as an external force and torque disturbance, i.e., every disturbance is matched. In addition, $\boldsymbol{M}$ also serves as the basis for the thrust allocation described below.

\begin{proposition}[Reachability]\label{prop:reachability}
Given the vehicle dynamics~\eqref{eq:translational_dynamics},\eqref{eq:rotational_dynamics}, the force and torque relation~\eqref{eq:F_matrix}, and sufficient limits on rotor velocities, all states $[\boldsymbol{p}^\top \; \boldsymbol{v}^\top \; \boldsymbol{R}^\top \; \boldsymbol{\omega}^\top]^\top$ (position, linear velocity, rotation matrix, angular velocity) are reachable.
\end{proposition}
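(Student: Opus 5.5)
The plan is to reduce reachability to full actuation of a rigid body on $\mathsf{SE}(3)$ and then build an explicit steering input. First, I establish that the map from actuator inputs $(\boldsymbol{\Omega},\boldsymbol{\alpha},\boldsymbol{\beta})$ to wrench $\boldsymbol{w}$ covers a neighborhood of any required wrench. By \eqref{eq:compact}, $\boldsymbol{w}=\boldsymbol{M}\boldsymbol{t}$ with $\boldsymbol{M}$ constant and of full row rank 6, so every $\boldsymbol{w}\in\mathbb{R}^6$ has a preimage $\boldsymbol{t}=\boldsymbol{M}^\dagger\boldsymbol{w}$. Each $\boldsymbol{t}_i\in\mathbb{R}^3$ is realized by the gimbal: choose $\beta_i=-\arcsin(t_{i,y}/\|\boldsymbol{t}_i\|)$ and $\alpha_i=\operatorname{atan2}(t_{i,x},t_{i,z})$, so that $\boldsymbol{G}_i(\alpha_i,\beta_i)=\boldsymbol{t}_i/\|\boldsymbol{t}_i\|$, and take $\Omega_i=\sqrt{\|\boldsymbol{t}_i\|/c_t}$. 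The map $(\alpha_i,\beta_i)\mapsto\boldsymbol{G}_i$ is onto the unit sphere, so $c_t\boldsymbol{G}_i\Omega_i^2$ ranges over all of $\mathbb{R}^3$. Hence $c_t\boldsymbol{F}\boldsymbol{\Omega}^{\circ2}$ in \eqref{eq:F_matrix} attains every wrench with $\|\boldsymbol{M}^\dagger\boldsymbol{w}\|$ within the rotor speed limit; this is where ``sufficient limits on rotor velocities'' enters.

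Second, with $\boldsymbol{f}$ and $\boldsymbol{\tau}$ treated as free inputs, I argue reachability directly. The dynamics \eqref{eq:translational_dynamics}--\eqref{eq:rotational_dynamics} are fully actuated, so any smooth trajectory is realizable. Given a start state $(\boldsymbol{p}_0,\boldsymbol{v}_0,\boldsymbol{R}_0,\boldsymbol{\omega}_0)$ and a target state, I construct a smooth curve $\boldsymbol{p}(t)$ on $[0,T]$ (e.g.\ a quintic polynomial) matching position and velocity at both ends. For attitude I take a smooth curve $\boldsymbol{R}(t)\in\mathsf{SO}(3)$ with the prescribed endpoints and endpoint body rates. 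Such a curve exists because $\mathsf{SO}(3)$ is connected: one can concatenate an exponential-map geodesic with short segments in $\exp([\cdot]_\times)$ that match $\boldsymbol{\omega}$ at the ends.

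The inputs are then recovered by inversion: $\boldsymbol{f}=\boldsymbol{R}^\top(m\ddot{\boldsymbol{p}}-m\boldsymbol{g})$ and $\boldsymbol{\tau}=\boldsymbol{J}\dot{\boldsymbol{\omega}}+\boldsymbol{\omega}\times\boldsymbol{J}\boldsymbol{\omega}$, with $\boldsymbol{\omega}=(\boldsymbol{R}^\top\dot{\boldsymbol{R}})^\vee$. These are continuous and bounded on the compact interval, so the first step supplies actuator commands whenever the rotor limit exceeds $\max_t\|\boldsymbol{M}^\dagger\boldsymbol{w}(t)\|$. Choosing $T$ large shrinks the accelerations, so the required wrench tends toward the hover wrench $(-m\boldsymbol{R}^\top\boldsymbol{g},\boldsymbol{0})$ plus terms from the fixed endpoint velocities. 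A finite sufficient bound therefore exists.

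The main obstacle is the realization step near degenerate points. When some $\boldsymbol{t}_i=\boldsymbol{0}$, the angles are undefined; I will resolve this by fixing arbitrary angles. Near gimbal lock ($\beta_i=\pm\pi/2$), $\alpha_i$ is discontinuous in $\boldsymbol{t}_i$. For reachability, which only needs existence of some input and not continuity of servo commands, I can allow piecewise-continuous angles; alternatively, I can add a null-space vector of $\boldsymbol{M}$ to keep each $\boldsymbol{t}_i$ away from the $\pm\boldsymbol{j}_{\mathcal{B}}$ poles. I will also state explicitly that the servo dynamics are idealized (angles are inputs), since otherwise the twelve-input claim must be reinterpreted.
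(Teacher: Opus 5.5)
Your proof is correct, and its first half is the paper's argument. The paper also inverts $\boldsymbol{G}_i$ to show that each gimbal realizes any $\boldsymbol{t}_i$, treats $\boldsymbol{t}$ as the control input, and uses that the constant $6\times 12$ matrix $\boldsymbol{M}$ has rank $6$ to conclude that every wrench is achievable.

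The difference is the last step. The paper stops there: it simply asserts that all states are reachable once the rotor-speed limit is large enough to counter gravity and the gyroscopic term $-\boldsymbol{\omega}\times\boldsymbol{J}\boldsymbol{\omega}$, and it never exhibits a steering input. You construct one. You take a smooth $(\boldsymbol{p}(t),\boldsymbol{R}(t))$ that matches both endpoint states, and inverse dynamics then gives $\boldsymbol{f}$ and $\boldsymbol{\tau}$. This makes ``sufficient limits'' a concrete condition, namely $c_t\Omega_{\max}^2 \ge \max_t\|\boldsymbol{M}^\dagger\boldsymbol{w}(t)\|$. Your time-stretching argument then shows that this condition tends to the paper's informal one: the hover wrench plus a gyroscopic torque bounded by the endpoint rates. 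So your route gives an actual point-to-point steering proof, and it makes explicit that the required bound depends on the target state through $\|\boldsymbol{\omega}\|$. The paper's version is shorter but leaves that step as an appeal to full actuation.

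Two small points to tidy up.

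\begin{itemize}
\item The $\boldsymbol{\omega}$-matching pieces on $\mathsf{SO}(3)$ must also be stretched in time, not kept ``short''; otherwise $\dot{\boldsymbol{\omega}}$ in those pieces does not shrink as $T$ grows. One way to do this: ramp the body rate along the fixed axes $\boldsymbol{\omega}_0/\|\boldsymbol{\omega}_0\|$ and $\boldsymbol{\omega}_1/\|\boldsymbol{\omega}_1\|$ via $\boldsymbol{R}(t)=\boldsymbol{R}_0\exp(\theta(t)[\boldsymbol{u}]_\times)$, which has body rate $\dot\theta\,\boldsymbol{u}$, over long intervals. Then join the resulting intermediate attitudes by a rest-to-rest geodesic. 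The swept angles are irrelevant because $\mathsf{SO}(3)$ is compact and connected.
\item The gimbal-lock discontinuity needs no null-space fix for this proposition. In the idealized model the dynamics see the actuators only through $\boldsymbol{t}_i$, which is reproduced exactly at the pole for any choice of $\alpha_i$. The applied wrench is therefore the continuous $\boldsymbol{w}(t)$ regardless of how $\alpha_i$ jumps.
\end{itemize}
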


\begin{corollary}[Controllability]\label{cor:controllability}
For any desired force $\boldsymbol{f}$ and torque $\boldsymbol{\tau}$, there exists a set of rotor thrust vectors $\boldsymbol{t} = [\boldsymbol{t}_1^\top \; \boldsymbol{t}_2^\top \; \boldsymbol{t}_3^\top \; \boldsymbol{t}_4^\top]^\top$ such that $\boldsymbol{f} = \sum_{i=1}^4 \boldsymbol{t}_i$ and $\boldsymbol{\tau} = \sum_{i=1}^4 \boldsymbol{l}_i \times \boldsymbol{t}_i$; that is, every admissible force and torque is realizable by the actuators.
\end{corollary}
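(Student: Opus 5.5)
The plan is to prove the corollary directly from the compact wrench map~\eqref{eq:compact}, $\boldsymbol{w}=\boldsymbol{M}\boldsymbol{t}$, by showing that $\boldsymbol{M}\in\mathbb{R}^{6\times12}$ has full row rank 6. Then every $\boldsymbol{w}=[\boldsymbol{f}^\top\;\boldsymbol{\tau}^\top]^\top$ lies in its range, and an explicit preimage is the minimum-norm solution $\boldsymbol{t}=\boldsymbol{M}^\top(\boldsymbol{M}\boldsymbol{M}^\top)^{-1}\boldsymbol{w}$. The link to Proposition~\ref{prop:reachability} is that this surjectivity is exactly what lets any desired $(\boldsymbol{f},\boldsymbol{\tau})$ be injected into~\eqref{eq:translational_dynamics}--\eqref{eq:rotational_dynamics}. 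The corollary is the purely algebraic statement underlying it, so it does not need the reachability argument itself.

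First I will write $\boldsymbol{M}$ explicitly as $\boldsymbol{M}=\begin{bmatrix}\boldsymbol{I}_3&\boldsymbol{I}_3&\boldsymbol{I}_3&\boldsymbol{I}_3\\ [\boldsymbol{l}_1]_\times&[\boldsymbol{l}_2]_\times&[\boldsymbol{l}_3]_\times&[\boldsymbol{l}_4]_\times\end{bmatrix}$. This reads off from $\boldsymbol{f}=\sum_i\boldsymbol{t}_i$ and $\boldsymbol{\tau}=\sum_i\boldsymbol{l}_i\times\boldsymbol{t}_i$.

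To show rank 6, I will exhibit a preimage constructively rather than compute a determinant. Take $\boldsymbol{t}_1=\boldsymbol{f}-\boldsymbol{t}_2-\boldsymbol{t}_3-\boldsymbol{t}_4$. The torque equation then becomes
\begin{equation*}
\boldsymbol{\tau}-\boldsymbol{l}_1\times\boldsymbol{f}=\sum_{i=2}^4(\boldsymbol{l}_i-\boldsymbol{l}_1)\times\boldsymbol{t}_i .
\end{equation*}
The map $(\boldsymbol{t}_2,\boldsymbol{t}_3,\boldsymbol{t}_4)\mapsto\sum_{i=2}^4[\boldsymbol{d}_i]_\times\boldsymbol{t}_i$, with $\boldsymbol{d}_i=\boldsymbol{l}_i-\boldsymbol{l}_1$, is onto $\mathbb{R}^3$ provided the $\boldsymbol{d}_i$ are not all collinear. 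This holds because the range of $[\boldsymbol{d}]_\times$ is $\boldsymbol{d}^\perp$. If $\boldsymbol{d}_2$ and $\boldsymbol{d}_3$ are non-parallel, then $\boldsymbol{d}_2^\perp+\boldsymbol{d}_3^\perp=\mathbb{R}^3$, since two distinct planes through the origin span space.

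In the H-configuration the four rotor centres are distinct and not all on one line, so some pair of the $\boldsymbol{d}_i$ is non-parallel. This gives a solution for every $(\boldsymbol{f},\boldsymbol{\tau})$, and equivalently rank$(\boldsymbol{M})=6$.

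Finally, I will translate each $\boldsymbol{t}_i$ back to actuator commands. Set $\Omega_i=\sqrt{\|\boldsymbol{t}_i\|/c_t}$, and choose $(\alpha_i,\beta_i)$ so that $\boldsymbol{G}_i(\alpha_i,\beta_i)=\boldsymbol{t}_i/\|\boldsymbol{t}_i\|$. Such angles exist because $\boldsymbol{G}_i$ covers the whole unit sphere: take $\beta_i=-\arcsin(t_{i,y}/\|\boldsymbol{t}_i\|)$ and $\alpha_i=\operatorname{atan2}(t_{i,x},t_{i,z})$. When $\boldsymbol{t}_i=\boldsymbol{0}$ the angles are arbitrary.

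The main obstacle is the word ``admissible''. Rank alone ignores the rotor speed bounds, so I will state the result for wrenches whose minimum-norm preimage satisfies $\|\boldsymbol{t}_i\|\le c_t\Omega_{\max}^2$, matching the ``sufficient limits'' hypothesis of Proposition~\ref{prop:reachability}. By homogeneity, any wrench direction can be realized after scaling. A secondary point is the gimbal-lock case $\cos\beta_i=0$, where $\alpha_i$ is non-unique. This affects only uniqueness of the angles, not existence, so it does not threaten the claim.
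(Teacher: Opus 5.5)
Your argument is correct and follows the same route as the paper, whose entire proof is the observation that $\boldsymbol{M}$ has full row rank and is therefore surjective onto $\mathbb{R}^6$. The difference is that the paper only asserts this rank claim, here and in the proof of Proposition~\ref{prop:reachability}. Your elimination of $\boldsymbol{t}_1$, together with the fact that the range of $[\boldsymbol{d}]_\times$ is $\boldsymbol{d}^\perp$, actually proves it, and shows it holds whenever the rotor centres are not collinear. Your translation back to $(\Omega_i,\alpha_i,\beta_i)$ and your reading of ``admissible'' through rotor-speed bounds also go beyond the paper's proof, which leaves both points to the informal discussion around Proposition~\ref{prop:reachability}.
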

\begin{proof}
Since $\boldsymbol{M}$ is full row rank, it is surjective onto $\mathbb{R}^6$: for any $\boldsymbol{w} = [\boldsymbol{f}^\top \; \boldsymbol{\tau}^\top]^\top$, the linear system $\boldsymbol{w} = \boldsymbol{M}\boldsymbol{t}$ admits at least one solution $\boldsymbol{t}$.
\end{proof}

\subsection*{Control}\label{sec:geometric}
The central capability of \morphable is the independent generation of forces and torques in all six degrees of freedom, enabling omnidirectional trajectory tracking in $\mathsf{SE}(3)$. This section describes the three-layer control architecture: a geometric controller that tracks desired trajectories on $\mathsf{SE}(3)$~\cite{lee2010geometric}, an $\mathcal{L}_1$ adaptive augmentation that compensates for unknown disturbances~\cite{wu2025l1quad}, and a thrust allocation strategy that converts desired wrenches into rotor commands while maximizing hovering efficiency and avoiding actuation deficiencies.

\subsubsection*{Geometric controller}
The geometric controller computes desired body-frame force and torque from tracking errors defined directly on the $\mathsf{SE}(3)$ manifold, avoiding the singularities inherent in Euler angle representations. The tracking errors consist of two configuration-level errors and two rate-level errors,
\begin{equation}
    \begin{aligned}
        \boldsymbol{e_{p}} & = \boldsymbol{p} - \boldsymbol{p}_{d}, & \text{(position error)} \\
        \boldsymbol{e_R} & = \frac{1}{2}\left(\boldsymbol{R}_d^\top\boldsymbol{R} - \boldsymbol{R}^\top\boldsymbol{R}_d\right)^\vee, & \text{(orientation error)} \\[4pt]
        \boldsymbol{e_{v}} & = \boldsymbol{v} - \boldsymbol{v}_{d},  & \text{(linear velocity error)} \\
        \boldsymbol{e_{\omega}} & = \boldsymbol{\omega} - \boldsymbol{R}^{\top} \boldsymbol{R}_d \, \boldsymbol{\omega}_{d}, & \text{(angular velocity error)}
    \end{aligned}
    \label{eq:errors}
\end{equation}
where $\boldsymbol{p}_{d}$, $\boldsymbol{v}_{d}$, $\boldsymbol{R}_{d}$, $\boldsymbol{\omega}_{d}$ are the desired position, velocity, rotation matrix, and angular velocity, and $(\cdot)^\vee: \mathfrak{so}(3) \rightarrow \mathbb{R}^{3}$ is the map that extracts a 3D vector from a skew-symmetric matrix.

The desired force and torque are
\begin{equation}
    \begin{aligned}
        \boldsymbol{f}_{\mathrm{Geo}} &= \boldsymbol{R}^\top \left(-k_{\boldsymbol{p}}\boldsymbol{e_p} - k_{\boldsymbol{v}} \boldsymbol{e_v} - m\boldsymbol{g} + m\boldsymbol{\ddot p}_d\right), \\
        \boldsymbol{\tau}_{\mathrm{Geo}} &= -k_{\boldsymbol{R}} \boldsymbol{e_R} - k_{\boldsymbol{\omega}} \boldsymbol{e_\omega} + \boldsymbol{\omega} \times \boldsymbol{J}\boldsymbol{\omega} - \boldsymbol{J}\left(\boldsymbol{\omega} \times \boldsymbol{R}^\top \boldsymbol{R}_{d} \boldsymbol{\omega}_{d} - \boldsymbol{R}^\top \boldsymbol{R}_{d} \dot{\boldsymbol{\omega}}_{d}\right).
    \end{aligned}
    \label{eq:des_force_moment}
\end{equation}
Operating directly on the nonlinear $\mathsf{SO}(3)$ manifold avoids the singularities present in minimal rotation parameterizations. This formulation provides almost-everywhere asymptotic stability, meaning convergence to any desired pose from all initial conditions except a zero-measure set~\cite{lee2010geometric}. In practice, \morphable encounters unknown disturbances such as wind and ground effects~\cite{xing2023wind,yang2025ground}. To maintain robust tracking, we augment the geometric controller with an $\mathcal{L}_1$ adaptive law that estimates and cancels these disturbances in real time.\footnote{\replytothree{Alternative methods include Incremental Nonlinear Dynamic Inversion~\cite{tal2020accurate}, for high-agility maneuvering, simultaneous system identification and model predictive control, for constraint-aware and mode-shifting (\eg from free flight to in-contact flight) task execution~\cite{zhou2025simultaneous}, and QP-based inner-loop control combining optimal-control formulations with INDI, for constraint-safe and robust trajectory tracking~\cite{balandi2025qp}.}}

\subsubsection*{Thrust allocation}\label{sec:allocation}
As an overactuated omnidirectional platform, \morphable must convert the six-dimensional desired wrench $\boldsymbol{w} = [\boldsymbol{f}^\top \;\; \boldsymbol{\tau}^\top]^\top$ into twelve low-level commands: four gimbal angle pairs $(\alpha_i, \beta_i)$ and four rotor speeds $\Omega_i$. The allocation exploits the fact that each rotor's thrust direction and magnitude are independently controllable. Since the low-level commands $\alpha_i$, $\beta_i$, and $\Omega_i$ of each rotor are decoupled, it suffices to find the four three-dimensional thrust vectors $\boldsymbol{t}_i$, from which the individual commands can be recovered.

We decompose each rotor's thrust vector, $\boldsymbol{t}_i$, into four additive terms: one quarter of the total desired force, plus three torque contributions for roll, pitch, and yaw. For a symmetric construction with half-length $l = |l_{i,x}|$, half-width $b = |l_{i,y}|$, and diagonal distance $r = \sqrt{l^2 + b^2}$ from the COM to each propeller, the per-rotor thrust vectors are
\begin{equation}
    \begin{aligned}
        \boldsymbol{t}_1 &= \frac{\boldsymbol{f}}{4} + \frac{\tau_x}{4b}\,\boldsymbol{k}_{\mathcal{B}} - \frac{\tau_y}{4l}\,\boldsymbol{k}_{\mathcal{B}} + \frac{\tau_z}{4r}\,\hat{\boldsymbol{e}}_{\psi_1}, \\
        \boldsymbol{t}_2 &= \frac{\boldsymbol{f}}{4} - \frac{\tau_x}{4b}\,\boldsymbol{k}_{\mathcal{B}} - \frac{\tau_y}{4l}\,\boldsymbol{k}_{\mathcal{B}} + \frac{\tau_z}{4r}\,\hat{\boldsymbol{e}}_{\psi_2}, \\
        \boldsymbol{t}_3 &= \frac{\boldsymbol{f}}{4} - \frac{\tau_x}{4b}\,\boldsymbol{k}_{\mathcal{B}} + \frac{\tau_y}{4l}\,\boldsymbol{k}_{\mathcal{B}} + \frac{\tau_z}{4r}\,\hat{\boldsymbol{e}}_{\psi_3}, \\
        \boldsymbol{t}_4 &= \frac{\boldsymbol{f}}{4} + \frac{\tau_x}{4b}\,\boldsymbol{k}_{\mathcal{B}} + \frac{\tau_y}{4l}\,\boldsymbol{k}_{\mathcal{B}} + \frac{\tau_z}{4r}\,\hat{\boldsymbol{e}}_{\psi_4},
    \end{aligned}
    \label{eq:inverse}
\end{equation}
where $\boldsymbol{\tau} = [\tau_x \;\; \tau_y \;\; \tau_z]^\top$ are the scalar components of the desired torque. The unit vectors $\hat{\boldsymbol{e}}_{\psi_i}$ (denoting the yaw-torque direction for rotor $i$) lie in the $\boldsymbol{i}_{\mathcal{B}}$--$\boldsymbol{j}_{\mathcal{B}}$ plane, perpendicular to the corresponding rotor arms,
\begin{equation}
    \hat{\boldsymbol{e}}_{\psi_1} = \begin{bmatrix} -b/r \\ l/r \\ 0 \end{bmatrix},\quad
    \hat{\boldsymbol{e}}_{\psi_2} = \begin{bmatrix} b/r \\ l/r \\ 0 \end{bmatrix},\quad
    \hat{\boldsymbol{e}}_{\psi_3} = \begin{bmatrix} b/r \\ -l/r \\ 0 \end{bmatrix},\quad
    \hat{\boldsymbol{e}}_{\psi_4} = \begin{bmatrix} -b/r \\ -l/r \\ 0 \end{bmatrix}.
    \label{eq:yaw-dir}
\end{equation}
The decomposition in~\eqref{eq:inverse} is equivalent to the Moore-Penrose pseudo-inverse solution $\boldsymbol{t} = \boldsymbol{M}^{\dagger}\boldsymbol{w}$ with $\boldsymbol{M}^{\dagger}=\boldsymbol{M}^{\top}(\boldsymbol{MM}^{\top})^{-1}$, which yields optimal thrust efficiency analogous to the minimum-norm allocation of a conventional quadrotor.

\begin{proposition}[Optimality]\label{prop:optimality}
The thrust distribution given by~\eqref{eq:inverse} is the optimal solution with respect to the cost function $E(\boldsymbol{t}) = \frac{1}{2}\sum_{i=1}^4 \|\boldsymbol{t}_i\|_2^2$.
\end{proposition}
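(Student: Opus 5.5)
The plan is to identify~\eqref{eq:inverse} with the minimum-norm solution of the underdetermined linear system $\boldsymbol{w}=\boldsymbol{M}\boldsymbol{t}$ from~\eqref{eq:compact}. Minimizing $E(\boldsymbol{t})=\tfrac12\|\boldsymbol{t}\|_2^2$ subject to $\boldsymbol{M}\boldsymbol{t}=\boldsymbol{w}$ is a strictly convex problem with affine constraints. Corollary~\ref{cor:controllability} shows the feasible set is nonempty, so the minimizer exists and is unique.

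\textbf{Step 1: characterize the minimizer.} The Lagrangian stationarity condition is $\boldsymbol{t}=\boldsymbol{M}^\top\boldsymbol{\lambda}$ for some $\boldsymbol{\lambda}=[\boldsymbol{a}^\top\;\boldsymbol{c}^\top]^\top\in\mathbb{R}^6$. Equivalently, use the orthogonal decomposition $\mathbb{R}^{12}=\mathrm{range}(\boldsymbol{M}^\top)\oplus\ker\boldsymbol{M}$. If $\boldsymbol{t}^\star\in\mathrm{range}(\boldsymbol{M}^\top)$ is feasible, then any other feasible $\boldsymbol{t}=\boldsymbol{t}^\star+\boldsymbol{n}$ with $\boldsymbol{n}\in\ker\boldsymbol{M}$ satisfies $\|\boldsymbol{t}\|^2=\|\boldsymbol{t}^\star\|^2+\|\boldsymbol{n}\|^2$. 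Hence $\boldsymbol{t}^\star$ is the unique optimum, and it coincides with $\boldsymbol{M}^\dagger\boldsymbol{w}=\boldsymbol{M}^\top(\boldsymbol{M}\boldsymbol{M}^\top)^{-1}\boldsymbol{w}$, since $\boldsymbol{M}$ has full row rank. It therefore suffices to show that the vectors in~\eqref{eq:inverse} (i) lie in $\mathrm{range}(\boldsymbol{M}^\top)$ and (ii) satisfy $\boldsymbol{M}\boldsymbol{t}=\boldsymbol{w}$.

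\textbf{Step 2: describe $\mathrm{range}(\boldsymbol{M}^\top)$ explicitly.} Since $\boldsymbol{M}\boldsymbol{t}=[\sum_i\boldsymbol{t}_i;\ \sum_i\boldsymbol{l}_i\times\boldsymbol{t}_i]$, the identity $\boldsymbol{c}^\top(\boldsymbol{l}_i\times\boldsymbol{t}_i)=\boldsymbol{t}_i^\top(\boldsymbol{c}\times\boldsymbol{l}_i)$ gives
\[
(\boldsymbol{M}^\top\boldsymbol{\lambda})_i=\boldsymbol{a}+\boldsymbol{c}\times\boldsymbol{l}_i .
\]
Take the symmetric H-layout $\boldsymbol{l}_i=(\pm l,\pm b,0)$, with the COM in the rotor plane; this is implicit in~\eqref{eq:inverse}, which uses only $l,b,r$. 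Then
\[
\boldsymbol{c}\times\boldsymbol{l}_i=(c_x l_{i,y}-c_y l_{i,x})\,\boldsymbol{k}_{\mathcal{B}}+c_z(-l_{i,y},\,l_{i,x},\,0)^\top .
\]
Checking against~\eqref{eq:yaw-dir} (for example $\boldsymbol{l}_1=(l,b,0)$ gives $(-b,l,0)=r\,\hat{\boldsymbol{e}}_{\psi_1}$), the second term equals $c_z r\,\hat{\boldsymbol{e}}_{\psi_i}$. Thus every vector of the form $\boldsymbol{a}+(\text{scalar}_i)\boldsymbol{k}_{\mathcal{B}}+c_z r\,\hat{\boldsymbol{e}}_{\psi_i}$ with this sign pattern lies in the row space. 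The candidate~\eqref{eq:inverse} has exactly this form, with $\boldsymbol{a}=\boldsymbol{f}/4$, $c_z=\tau_z/(4r^2)$, and $c_x=\tau_x/(4b^2)$, $c_y=\tau_y/(4l^2)$ up to sign. I expect matching the per-rotor signs of the $\tau_x,\tau_y$ terms against $l_{i,y},l_{i,x}$ to be the fiddly part, and I will check all four rotors.

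\textbf{Step 3: verify feasibility.} Summing the four $\boldsymbol{t}_i$ gives $\boldsymbol{f}$, because the $\boldsymbol{k}_{\mathcal{B}}$ coefficients cancel by the sign pattern and $\sum_i\hat{\boldsymbol{e}}_{\psi_i}=\boldsymbol{0}$. For the torque, I will compute $\sum_i\boldsymbol{l}_i\times\boldsymbol{t}_i$ term by term:
\begin{itemize}
\item the $\boldsymbol{f}/4$ part gives $(\sum_i\boldsymbol{l}_i)\times\boldsymbol{f}/4=\boldsymbol{0}$;
\item $\boldsymbol{l}_i\times\boldsymbol{k}_{\mathcal{B}}=(l_{i,y},-l_{i,x},0)$, so the roll terms contribute $\sum_i(\pm\tau_x/4b)\,l_{i,y}$ in the $x$-component, which should equal $\tau_x$, with the cross-coupling into $y$ cancelling; the pitch terms are handled analogously;
\item $\boldsymbol{l}_i\times\hat{\boldsymbol{e}}_{\psi_i}=r\,\boldsymbol{k}_{\mathcal{B}}$, so the yaw terms contribute $4\cdot\frac{\tau_z}{4r}\,r=\tau_z$.
\end{itemize}
This establishes (ii).

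The main obstacle is bookkeeping rather than a conceptual difficulty: getting the sign convention of~\eqref{eq:inverse} consistent with the rotor numbering, and stating explicitly the planar assumption $l_{i,z}=0$. Without that assumption, $\boldsymbol{c}\times\boldsymbol{l}_i$ acquires in-plane components and~\eqref{eq:inverse} would no longer be the pseudo-inverse.
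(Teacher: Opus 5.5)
Your proposal is correct and is essentially the paper's Lagrangian argument. Your row-space condition $\boldsymbol{t}_i=\boldsymbol{a}+\boldsymbol{c}\times\boldsymbol{l}_i$ is exactly the paper's stationarity condition $\boldsymbol{t}_i=\lambda+\mu\times\boldsymbol{l}_i$, and your multipliers $\boldsymbol{a}=\boldsymbol{f}/4$, $\boldsymbol{c}=[\tau_x/4b^2,\ \tau_y/4l^2,\ \tau_z/4r^2]^\top$ are the paper's $\lambda,\mu$. With $\boldsymbol{l}_1=(l,b,0)$, $\boldsymbol{l}_2=(l,-b,0)$, $\boldsymbol{l}_3=(-l,-b,0)$, $\boldsymbol{l}_4=(-l,b,0)$ read off from $\boldsymbol{M}$, the per-rotor signs match with no flips, so your ``up to sign'' hedge is unnecessary.

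The only difference is that you verify the candidate rather than solve for the multipliers. Your orthogonal-decomposition step also gives the global optimality and uniqueness that the paper leaves implicit in its first-order conditions.
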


Since $\boldsymbol{t}_{i} = c_t \boldsymbol{G}_i(\alpha_i, \beta_i)\,\Omega_i^2$, the rotor speed and servo angles can be recovered from each thrust vector. Defining the unit thrust direction $\hat{\boldsymbol{t}}_i = \boldsymbol{t}_i / \|\boldsymbol{t}_i\|$ with components $\hat{\boldsymbol{t}}_i = [\hat{t}_{i,x} \;\; \hat{t}_{i,y} \;\; \hat{t}_{i,z}]^\top$, the trigonometric relations give
\begin{equation}
    \Omega_i = \sqrt{\|\boldsymbol{t}_i\|/c_t}\;,\quad\text{and }
    \begin{cases}
        \beta_i = \sin^{-1}(-\hat{t}_{i,y}),\quad \alpha_i = \mathrm{atan2}\!\left(\hat{t}_{i,x},\, \hat{t}_{i,z}\right),\\[3pt]
        \beta_i = \pi-\sin^{-1}(\hat{t}_{i,y}),\quad \alpha_i = \pi + \mathrm{atan2}\!\left(\hat{t}_{i,x},\, \hat{t}_{i,z}\right),
    \end{cases}
    \label{eq:alpha-beta-computation}
\end{equation}
where $\mathrm{atan2}(\cdot,\cdot)$ is the two-argument arctangent. The two solution branches arise because $\sin(\cdot) = \sin(\pi - \cdot)$. The output of $\sin^{-1}(\cdot)$ lies in $[-\pi/2,\, \pi/2]$, so $\beta_i$ can reach $[-\pi/2,\, 3\pi/2]$. Similarly, $\mathrm{atan2}(\cdot,\cdot)$ uses the signs of numerator and denominator to cover $[-\pi,\, \pi]$ for $\alpha_i$. To minimize servo motion at each control step, we compare the current servo position $(\alpha_i',\, \beta_i')$ with all candidate solutions $\alpha_i + 2k\pi$ and $\beta_i + 2k\pi$ for $|k| \leq n_r,\, k \in \mathbb{Z}$, and select the pair that minimizes $|\alpha_i' - \alpha_i - 2k\pi|$ and $|\beta_i' - \beta_i - 2k\pi|$ separately, where $n_r$ constrains the maximum number of revolutions beyond one trigonometric period.

\subsubsection*{Null-space correction for inter-rotor downwash and gimbal lock}
\replytoall{We extend the minimum-norm thrust allocation~\eqref{eq:inverse} with a correction that is necessary for stable flight, especially when \morphable undergoes multi-revolution maneuvering: it accounts for (a)~inter-rotor downwash interference~\cite{su2022downwash} and (b)~gimbal lock.} In more detail:
\textbf{(a)~Inter-rotor downwash interference.} When the desired torque $\boldsymbol{\tau} = \boldsymbol{0}$, the thrust allocation~\eqref{eq:inverse} yields co-linear thrust vectors $\boldsymbol{t}_i \parallel \boldsymbol{f}$ for all $i$. If the desired force is along $\boldsymbol{i}_{\mathcal{B}}$, the outer servo positions become $\alpha_i = \pm\pi/2$ with $\beta_i = 0$, so all four rotor systems point their downwash toward the opposing pair. This mutual interference invalidates the thrust model in~\eqref{eq:force_omega_relation} and reduces the effective thrust of the affected rotors.
\textbf{(b)~Gimbal lock.} If the desired force is along $\boldsymbol{j}_{\mathcal{B}}$ with $\boldsymbol{\tau} = \boldsymbol{0}$, the inner servo reaches $\beta_i = \pm\pi/2$, at which point $\cos\beta_i = 0$ and the outer servo $\alpha_i$ loses authority over the thrust direction. To illustrate the practical consequence, consider \morphable hovering at a $\pi/2$ roll angle so that $\boldsymbol{f} = [0\;\; mg\;\; 0]^\top$. If the controller subsequently commands a small perturbation $[\epsilon\;\; mg\;\; 0]^\top$ followed by $[-\epsilon\;\; mg\;\; 0]^\top$ at high frequency, the computed $\alpha_i$ would oscillate between $\pm\pi/2$, saturating the servos and destabilizing flight. This singularity shares the same mathematical root as mapping elements of $\mathbb{S}^2$ to $\mathbb{T}^2$.

We exploit the overactuation of \morphable to implement both corrections through the null space of $\boldsymbol{M}$. Because $\boldsymbol{M} \in \mathbb{R}^{6\times 12}$ is a wide matrix, the linear system $\boldsymbol{w} = \boldsymbol{Mt}$ is underdetermined. The general solution takes the form
\begin{equation}
    \boldsymbol{t} = \boldsymbol{M}^{\dagger}\boldsymbol{w} + \boldsymbol{N}_M\boldsymbol{c}^N,
    \label{eq:pseudoinverse-general}
\end{equation}
where $\boldsymbol{N}_M$ is the null-space basis of $\boldsymbol{M}$ (constant rank six) and $\boldsymbol{c}^N \in \mathbb{R}^6$ is a coefficient vector that redistributes thrust among the rotors without changing the net wrench.

The coefficients are chosen to satisfy two conditions: (i) adjacent rotor pairs must not have co-linear thrust vectors when $\boldsymbol{f} \parallel \boldsymbol{i}_{\mathcal{B}}$ or $\boldsymbol{f} \parallel \boldsymbol{j}_{\mathcal{B}}$, to prevent downwash interference, and (ii) no individual thrust vector may be parallel to $\boldsymbol{j}_{\mathcal{B}}$, to avoid gimbal lock.

To satisfy condition (ii) and the second part of condition (i), we set $c^N_1 = c^N_3 = c^N_5 = \langle \boldsymbol{f}/\|\boldsymbol{f}\|,\, \boldsymbol{j}_{\mathcal{B}}\rangle$. This introduces a component that repels each inner servo position away from $\beta_i = \pm\pi/2$, proportionally to how close the desired force direction is to the gimbal-lock axis. For the first part of condition (i), we tilt the rotor pair on top away from pointing directly at the bottom pair by setting
\begin{equation}
    \begin{cases}
        c^N_2 = -c^N_4 = -\langle\boldsymbol{f}/\|\boldsymbol{f}\|,  \boldsymbol{i}_{\mathcal{B}}\rangle,\;\; c^N_6 = 0, & \text{if } \langle\boldsymbol{f}/\|\boldsymbol{f}\|,  \boldsymbol{i}_{\mathcal{B}}\rangle > 0, \\
        c^N_2 = -c^N_4 = 0,\;\; c^N_6 = -\langle\boldsymbol{f}/\|\boldsymbol{f}\|, \boldsymbol{i}_{\mathcal{B}}\rangle, & \text{if } \langle\boldsymbol{f}/\|\boldsymbol{f}\|, \boldsymbol{i}_{\mathcal{B}}\rangle \leq 0.
    \end{cases}
    \label{eq:null-cases}
\end{equation}
This conditional assignment ensures that the anti-parallel null-space perturbation is applied to whichever rotor pair axis is most aligned with the body-frame $x$-direction, breaking the co-linearity that would otherwise cause direct downwash impingement.

\begin{proposition}[Wrench preservation]\label{cor:wrench-preservation}
For any coefficient vector $\boldsymbol{c}^N \in \mathbb{R}^6$ in~\eqref{eq:pseudoinverse-general}, including the choice of $\boldsymbol{c}^N$ above that accounts for gimbal lock and downwash interference, the thrust allocation delivers exactly the wrench commanded by the geometric controller.
\end{proposition}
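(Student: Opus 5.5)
The argument is a direct linear-algebra computation on the compact form \eqref{eq:compact}. I apply $\boldsymbol{M}$ to the general solution \eqref{eq:pseudoinverse-general} and split the result into the pseudo-inverse term and the null-space term:
\begin{equation*}
\boldsymbol{M}\boldsymbol{t} = \boldsymbol{M}\boldsymbol{M}^{\dagger}\boldsymbol{w} + \boldsymbol{M}\boldsymbol{N}_M\boldsymbol{c}^N .
\end{equation*}

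\textbf{First term.} I use that $\boldsymbol{M}$ is full row rank, the fact already used in Corollary~\ref{cor:controllability}. Then $\boldsymbol{M}\boldsymbol{M}^\top\in\mathbb{R}^{6\times6}$ is invertible, so $\boldsymbol{M}^{\dagger}=\boldsymbol{M}^\top(\boldsymbol{M}\boldsymbol{M}^\top)^{-1}$ is well defined. It is a right inverse, since $\boldsymbol{M}\boldsymbol{M}^{\dagger}=\boldsymbol{I}_6$. Hence the first term equals $\boldsymbol{w}$.

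\textbf{Second term.} By definition, the columns of $\boldsymbol{N}_M$ span $\ker\boldsymbol{M}$, so $\boldsymbol{M}\boldsymbol{N}_M=\boldsymbol{0}_{6\times 6}$. The second term therefore vanishes for every $\boldsymbol{c}^N\in\mathbb{R}^6$. This holds regardless of how $\boldsymbol{c}^N$ is computed, including the state-dependent choices $c^N_1=c^N_3=c^N_5=\langle \boldsymbol{f}/\|\boldsymbol{f}\|,\boldsymbol{j}_{\mathcal{B}}\rangle$ and the case split \eqref{eq:null-cases}. Those choices are evaluated pointwise in time and simply give some vector in $\mathbb{R}^6$. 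The rank count $12-6=6$ confirms that $\boldsymbol{N}_M$ has six columns, consistent with $\boldsymbol{c}^N\in\mathbb{R}^6$.

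\textbf{Conclusion and main obstacle.} Together, the two terms give $\boldsymbol{M}\boldsymbol{t}=\boldsymbol{w}$. With $\boldsymbol{w}=[\boldsymbol{f}_{\mathrm{Geo}}^\top\;\boldsymbol{\tau}_{\mathrm{Geo}}^\top]^\top$, this is exactly the wrench commanded by \eqref{eq:des_force_moment}, possibly with the $\mathcal{L}_1$ augmentation added. The main obstacle is not this algebra but the link from the abstracted thrust vectors $\boldsymbol{t}_i$ back to the actuator commands. I must check that the recovery \eqref{eq:alpha-beta-computation} reproduces each $\boldsymbol{t}_i$ exactly, i.e.\ $c_t\boldsymbol{G}_i(\alpha_i,\beta_i)\Omega_i^2=\boldsymbol{t}_i$.

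For this check I would verify three things:
\begin{itemize}
\item Both branches satisfy $\boldsymbol{G}_i=\hat{\boldsymbol{t}}_i$. In the second branch, $\cos\beta_i$ changes sign and $\alpha_i$ shifts by $\pi$, so both $\sin\alpha_i$ and $\cos\alpha_i$ also flip sign and the two sign flips cancel in the $x$ and $z$ components.
\item The $2k\pi$ revolution selection leaves $\boldsymbol{G}_i$ unchanged.
\item $\Omega_i^2=\|\boldsymbol{t}_i\|/c_t$ recovers the magnitude.
\end{itemize}
If some $\boldsymbol{t}_i=\boldsymbol{0}$, its direction is undefined, but any angles with $\Omega_i=0$ give zero thrust, so the identity still holds. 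Since each $\boldsymbol{l}_i$ is fixed, substituting these $\boldsymbol{t}_i$ into \eqref{eq:F_matrix} yields $c_t\boldsymbol{F}\boldsymbol{\Omega}^{\circ2}=\boldsymbol{M}\boldsymbol{t}=\boldsymbol{w}$. This argument assumes rotor speed saturation is not active.
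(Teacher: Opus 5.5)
Your core computation is the paper's proof. Full row rank gives $\boldsymbol{M}\boldsymbol{M}^{\dagger}=\boldsymbol{I}_6$, the null-space basis gives $\boldsymbol{M}\boldsymbol{N}_M=\boldsymbol{0}$, and hence $\boldsymbol{M}\boldsymbol{t}=\boldsymbol{w}$ for every $\boldsymbol{c}^N$. The paper stops there, treating the thrust vectors $\boldsymbol{t}_i$ as the inputs, so the statement is fully proved at that point.

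The actuator-recovery layer you add is not in the paper, and its first check is wrong as stated. For the printed second branch, $\beta_i=\pi-\sin^{-1}(\hat{t}_{i,y})$ gives $\sin\beta_i=\hat{t}_{i,y}$. The $y$-component of $\boldsymbol{G}_i$ is then $-\sin\beta_i=-\hat{t}_{i,y}$, and you only compared the $x$ and $z$ components. So that branch reproduces $\boldsymbol{t}_i$ only when $\hat{t}_{i,y}=0$. The correct supplement of the first branch is $\beta_i=\pi-\sin^{-1}(-\hat{t}_{i,y})=\pi+\sin^{-1}(\hat{t}_{i,y})$, paired with $\alpha_i=\pi+\mathrm{atan2}(\hat{t}_{i,x},\hat{t}_{i,z})$. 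The branch must also be chosen jointly for $(\alpha_i,\beta_i)$: pairing $\alpha_i$ from one branch with $\beta_i$ from the other flips the $x$ and $z$ components, and the paper's rule of minimizing the two servo displacements ``separately'' does not by itself exclude this. Either drop the extension, since the proposition concerns the allocated $\boldsymbol{t}$, or correct these points before relying on it.

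A related caution: both you and the paper obtain $\boldsymbol{M}\boldsymbol{N}_M=\boldsymbol{0}$ from the definition of $\boldsymbol{N}_M$, not from the matrix printed in the Supplementary. That matrix's sixth column, with entries $-1$ in the $t_{1,y}$ and $t_{2,y}$ slots, is not in $\ker\boldsymbol{M}$: it produces $f_y=-2$ and $\tau_z=-2l$. This is exactly the column switched on when $c^N_6\neq 0$ in the case split. The intended entry is presumably $+1$ in the $t_{2,y}$ slot, an anti-parallel pair like the one produced by $c^N_2=-c^N_4$.
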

\begin{proof}
Since $\boldsymbol{M}$ has constant full row rank six, $\boldsymbol{M}\boldsymbol{M}^{\dagger} = \boldsymbol{I}_6$; and since $\boldsymbol{N}_M$ is a basis of the null space of $\boldsymbol{M}$ by construction, $\boldsymbol{M}\boldsymbol{N}_M = \boldsymbol{0}$. Left-multiplying~\eqref{eq:pseudoinverse-general} by $\boldsymbol{M}$ gives
\begin{equation*}
    \boldsymbol{M}\boldsymbol{t} = \boldsymbol{M}\boldsymbol{M}^{\dagger}\boldsymbol{w} + \boldsymbol{M}\boldsymbol{N}_M\boldsymbol{c}^N = \boldsymbol{w} + \boldsymbol{0} = \boldsymbol{w},
\end{equation*}
for every $\boldsymbol{c}^N \in \mathbb{R}^6$.
\end{proof}
\noindent
Consequently, the null-space correction leaves the wrench acting on \morphable unchanged from the commanded $\boldsymbol{f}_{\mathrm{Geo}}$ and $\boldsymbol{\tau}_{\mathrm{Geo}}$ in~\eqref{eq:des_force_moment} (see \Cref{thm:exp-stable-corrected} for the resulting stability guarantee).

We emphasize that the null-space correction is a necessary component of the allocation because both failure modes it addresses are fatal to the flight stability. Under~(a), the mutual downwash collapses the effective thrust of the impinged rotors, so the delivered wrench no longer matches the commanded wrench through~\eqref{eq:force_omega_relation}. Under~(b), the computed servo commands oscillate between $\pm\pi/2$, saturating the servos.
\replytoall{We validate the necessity of the null-space correction via simulation; the desired trajectory performs a $360^\circ$ pitch, followed by a consecutive $360^\circ$ roll, passing through the allocation singularity $\beta = \pm\pi/2$ where the outer-servo angle $\alpha$ becomes wrench-irrelevant. Without the null-space correction, the minimum-norm allocation provides no mechanism to regulate the outer servos at this singularity; the commanded $\alpha$ track winds up by more than two full turns relative to the lagging physical servos, with jumps exceeding 1\,rad, and the vehicle consequently destabilizes (\Cref{fig:nullspace-ablation}(B)). With the correction active, \morphable tracks the identical trajectory under the same controller parameters without commanded servo jumps (\Cref{fig:nullspace-ablation}(A)). This confirms that the null-space correction is a stability requirement for maneuvers crossing $90^\circ$ roll. Accordingly, all flight experiments in this work operate with the correction active.}

\begin{theorem}[Exponential stability, accounting for gimbal lock and downwash interference]\label{thm:exp-stable-corrected}
Consider \morphable operating under the geometric controller~\eqref{eq:des_force_moment} together with the null-space-corrected thrust allocation~\eqref{eq:pseudoinverse-general}. Suppose the initial conditions satisfy
\begin{equation}
    \begin{gathered}
        \Psi\left(\boldsymbol{R}(0), \boldsymbol{R}_{d}(0)\right) < 2, \\
        \left\|\boldsymbol{e}_{\boldsymbol{\omega}}(0)\right\|^{2}<\frac{2 k_{\boldsymbol{R}}}{\lambda_{\max }(\boldsymbol{J})} \left(2-\Psi\left(\boldsymbol{R}(0),\boldsymbol{R}_{d}(0)\right)\right),
    \end{gathered}
    \label{eq:ROA}
\end{equation}
where $\Psi(\cdot,\cdot) : \mathsf{SO}(3) \times \mathsf{SO}(3) \rightarrow \mathbb{R}$, $\Psi\left(\boldsymbol{R}, \boldsymbol{R}_{d}\right) \triangleq \frac{1}{2} \tr\left(I - \boldsymbol{R}_{d}^{\top} \boldsymbol{R} \right)$. Then, the zero equilibrium of the closed-loop tracking error $(\boldsymbol{e_p, e_v, e_R, e_\omega}) = (\boldsymbol{0, 0, 0, 0})$ is exponentially stable, including in flight regimes that induce gimbal lock or inter-rotor downwash interference.
\end{theorem}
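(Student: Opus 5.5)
The plan is to reduce the theorem to the standard geometric-control stability argument by using Proposition~\ref{cor:wrench-preservation}. That proposition shows the corrected allocation~\eqref{eq:pseudoinverse-general} delivers exactly $\boldsymbol{M}\boldsymbol{t}=\boldsymbol{w}$ for every $\boldsymbol{c}^N$. This includes the state-dependent choice~\eqref{eq:null-cases}, which depends on $\boldsymbol{f}/\|\boldsymbol{f}\|$ and is therefore a function of the state and reference. So the closed-loop wrench in~\eqref{eq:translational_dynamics},\eqref{eq:rotational_dynamics} equals $(\boldsymbol{f}_{\mathrm{Geo}},\boldsymbol{\tau}_{\mathrm{Geo}})$ identically. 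The null-space term then drops out of the error dynamics entirely. Gimbal lock and downwash regimes only enter through which $\boldsymbol{t}$ realizes the wrench, not through the wrench itself, under the model~\eqref{eq:compact} with thrust vectors as inputs. I would also invoke Corollary~\ref{cor:controllability} and the ``sufficient limits on rotor velocities'' assumption of Proposition~\ref{prop:reachability} to justify that the commanded $\boldsymbol{t}$ is realizable.

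Next I would substitute $\boldsymbol{f}_{\mathrm{Geo}}$ into~\eqref{eq:translational_dynamics}. This gives $m\dot{\boldsymbol{e}}_{\boldsymbol v} = -k_{\boldsymbol p}\boldsymbol{e_p}-k_{\boldsymbol v}\boldsymbol{e_v}$, with the gravity and feedforward terms cancelling because $\boldsymbol{R}\boldsymbol{R}^\top=I$. This is a linear, exponentially stable system that is fully decoupled from attitude; this decoupling is the key structural difference from the underactuated quadrotor case. For attitude, substituting $\boldsymbol{\tau}_{\mathrm{Geo}}$ into~\eqref{eq:rotational_dynamics} gives $\boldsymbol{J}\dot{\boldsymbol{e}}_{\boldsymbol\omega} = -k_{\boldsymbol R}\boldsymbol{e_R}-k_{\boldsymbol\omega}\boldsymbol{e_\omega}$, using the standard identity $\dot{\boldsymbol e}_{\boldsymbol\omega}=\dot{\boldsymbol\omega}+[\boldsymbol\omega]_\times\boldsymbol R^\top\boldsymbol R_d\boldsymbol\omega_d-\boldsymbol R^\top\boldsymbol R_d\dot{\boldsymbol\omega}_d$. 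This is exactly the attitude error dynamics of~\cite{lee2010geometric}.

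For the attitude subsystem I would use the Lyapunov candidate
\[
V_R=\tfrac12\boldsymbol{e_\omega}^\top\boldsymbol J\boldsymbol{e_\omega}+k_{\boldsymbol R}\Psi(\boldsymbol R,\boldsymbol R_d)+c_2\,\boldsymbol{e_R}^\top\boldsymbol{e_\omega},
\]
and for translation the candidate
\[
V_p=\tfrac12 m\|\boldsymbol{e_v}\|^2+\tfrac12 k_{\boldsymbol p}\|\boldsymbol{e_p}\|^2+c_1\boldsymbol{e_p}^\top\boldsymbol{e_v}.
\]
The proof then proceeds in three steps. \textbf{First}, use $\dot\Psi=\boldsymbol{e_R}^\top\boldsymbol{e_\omega}$ and $\|\dot{\boldsymbol e}_{\boldsymbol R}\|\le\|\boldsymbol{e_\omega}\|$. \textbf{Second}, on the sublevel set where $\Psi\le\psi<2$, use the bounds $\tfrac12\|\boldsymbol{e_R}\|^2\le\Psi\le\frac{1}{2-\psi}\|\boldsymbol{e_R}\|^2$ from~\cite{lee2010geometric} to sandwich $V_R$ between quadratic forms in $(\|\boldsymbol{e_R}\|,\|\boldsymbol{e_\omega}\|)$. \textbf{Third}, for small enough $c_1,c_2$, show that $\dot V_p\le -z_p^\top W_p z_p$ and $\dot V_R\le -z_R^\top W_R z_R$ with $W_p,W_R\succ0$.

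The step I expect to need the most care is establishing forward invariance of the region $\Psi<2$. This uses the initial-condition bound~\eqref{eq:ROA}: since $\tfrac12\boldsymbol{e_\omega}^\top\boldsymbol J\boldsymbol{e_\omega}+k_{\boldsymbol R}\Psi$ is nonincreasing (its derivative is $-k_{\boldsymbol\omega}\|\boldsymbol{e_\omega}\|^2$), we get
\[
k_{\boldsymbol R}\Psi(t)\le \tfrac12\lambda_{\max}(\boldsymbol J)\|\boldsymbol{e_\omega}(0)\|^2+k_{\boldsymbol R}\Psi(0)<2k_{\boldsymbol R}.
\]
Hence $\Psi$ stays bounded away from 2, which validates the quadratic bounds. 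Summing the two Lyapunov functions, with no cross terms because the subsystems are decoupled, yields exponential convergence of $(\boldsymbol{e_p},\boldsymbol{e_v},\boldsymbol{e_R},\boldsymbol{e_\omega})$ to zero. The final remark is that, because the preceding argument never used the particular $\boldsymbol{c}^N$, the conclusion holds in the gimbal-lock and downwash regimes as well.
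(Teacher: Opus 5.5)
Your proposal is correct and follows essentially the same route as the paper. You use Proposition~\ref{cor:wrench-preservation} to reduce the corrected closed loop to the nominal error dynamics, apply the decoupled Lyapunov candidates $\mathcal{V}_1,\mathcal{V}_2$ with sufficiently small $c_1,c_2$, and invoke Corollary~\ref{cor:controllability} for realizability; the only difference is that you explicitly prove forward invariance of $\{\Psi\le\psi<2\}$ from the nonincreasing quantity $\tfrac12\boldsymbol{e_\omega}^\top\boldsymbol{J}\boldsymbol{e_\omega}+k_{\boldsymbol R}\Psi$, which the paper instead imports from Proposition~1 of \cite{lee2010geometric} through its constant $\psi_2$.
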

\noindent See Supplementary Materials, Proof of \Cref{thm:exp-stable-corrected}, for the proof.


\bibliography{scibib}
\bibliographystyle{sciencemag}

\section*{Acknowledgments}

The authors thank the original team members Daniel Bernstein, Yanyu Chen, Laasya Chukka, and Monica Khalique, who contributed to the early development of the vehicle platform alongside A.Z. A video of the early prototype from 2022 is available at \url{https://www.youtube.com/watch?v=wFEDdiamtT8}. The project website, including supplementary videos and additional materials, is available at \url{https://iral-morphable.github.io/}.

\textbf{Funding:} This work was supported by the Michigan Translational Research and Commercialization (MTRAC) Advanced Transportation Innovation Hub, Mid-Stage Funding Program.

\textbf{Author contributions:} J.D.P.G.P.\ led the software development, including the control architecture, thrust allocation, firmware, and simulation, and conducted all experiments and data analysis. J.X.\ led the manuscript writing, visualization, and multimedia presentation, conducted all experiments, designed the thrust allocation strategy, and contributed to the controller design and augmentation. A.Z.\ led the complete electromechanical design and fabrication of the novel quadrotor from system conception to implementation and flight, and supported the integration of the controller and firmware. H.Z.\ contributed to system design, visual-inertial odometry, controller design, and stability proof of the geometric controller. A.N.\ contributed to the original thrust allocation. A.S.\ and S.S.\ led initial firmware and simulation development. A.M.R.\ designed experimental end-effectors and assisted with vehicle assembly. Y.B.\ contributed to the early vehicle design. V.T.\ led the project. All authors reviewed and edited the manuscript.

\textbf{Competing interests:} The hardware design is the subject of pending U.S. patent application No.~63/703,312 (filed October 2024). The authors declare that they have no competing interests. 
The almost-everywhere stable control, accounting for gimbal lock and downwash interference, is the subject of pending U.S. patent application No.~19/734,547 (filed July 2026).

\newpage
\section*{Supplementary Materials}
\subsection*{Matrix Elements}
$\boldsymbol{G}_i(\alpha_i, \beta_i)$ is the $i$-th column of $\boldsymbol{G}$, which is obtained by expanding the forward mapping from the rotor force to the body-frame force,
\begin{equation}
    \begin{aligned}
        \boldsymbol{f}&= \sum_{i=1}^{4}\boldsymbol{R}_{\mathcal{P}_i}(\alpha_i,\beta_i)\left(c_t \Omega_i^2\right){\boldsymbol{k}_{\mathcal{P}_i}}\\
        &=\sum_{i=1}^{4} \boldsymbol{R}_{{j}}(\alpha_i)\boldsymbol{R}_{{i}}(\beta_i)\left(c_t \Omega_i^2\right){\boldsymbol{k}_{\mathcal{P}_i}} \\
        &=\sum_{i=1}^{4}\left(c_t \Omega_i^2\right)\begin{bmatrix}
            \cos \alpha_i & 0 & \sin \alpha_i \\
            0 & 1 & 0 \\
            -\sin \alpha_i & 0 & \cos \alpha_i
            \end{bmatrix} \begin{bmatrix}
            1 & 0 & 0 \\
            0 & \cos \beta_i & -\sin \beta_i \\
            0 & \sin \beta_i & \cos \beta_i
        \end{bmatrix}\begin{bmatrix}
            0\\ 0\\ 1
        \end{bmatrix}\\
        &= c_{t}\begin{bmatrix}
        s\alpha_1 c\beta_1 & s\alpha_2 c\beta_2 & s\alpha_3 c\beta_3 & s\alpha_4 c\beta_4 \\
        -s\beta_1 & -s\beta_2 & -s\beta_3 & -s\beta_4 \\
        c\alpha_1 c\beta_1 & c\alpha_2 c\beta_2 & c\alpha_3 c\beta_3 & c\alpha_4 c\beta_4  \\
        \end{bmatrix}\begin{bmatrix}
        \Omega_{1}^{2} \\
        \Omega_{2}^{2} \\
        \Omega_{3}^{2} \\
        \Omega_{4}^{2}
        \end{bmatrix}, \\        
        &= c_{t} \boldsymbol{G} \boldsymbol{\Omega}^{\circ2}.
    \end{aligned}
    \label{eq:G-der}
\end{equation}

\begin{equation}
    \begin{aligned}
        \boldsymbol{\tau} &= \sum_{i=1}^{4} \boldsymbol{l}_i \times {\boldsymbol{t}}_{i},  \\
        &= \sum_{i=1}^{4} (c_t \Omega_i^2) \begin{bmatrix}
            l_{i,x} \\ l_{i,y} \\ l_{i,z} \end{bmatrix} \times \begin{bmatrix}
            c\alpha_i & s\alpha_i s\beta_i & s\alpha_i c\beta_i \\
            0 & c\beta_i & -s\beta_i \\
            -s\alpha_i & c\alpha_i s\beta_i & c\alpha_i c\beta_i
            \end{bmatrix}\begin{bmatrix}0\\0\\1\end{bmatrix},  \\
        &= \sum_{i=1}^{4} \left(c_t \Omega_i^2\right) \begin{bmatrix}
        l_{i,y} \cos \alpha_i \cos \beta_i + l_{i,z} \sin \beta_i \\
        - l_{i,x} \cos \alpha_i \cos \beta_i + l_{i,z} \sin \alpha_i \cos \beta_i \\
        - l_{i,x} \sin \beta_i - l_{i,y} \sin \alpha_i \cos \beta_i
        \end{bmatrix} \\
        &= c_t \boldsymbol{H} \boldsymbol{\Omega}^{\circ2},
    \end{aligned}
    \label{eq:H-der}
\end{equation}
where $\boldsymbol{H}^\top = \begin{bmatrix}
    l_{1,y} c\alpha_1 c\beta_1 + l_{1,z} s\beta_1 & - l_{1,x} c\alpha_1 c\beta_1 + l_{1,z} s\alpha_1 c\beta_1 & - l_{1,x} s\beta_1 - l_{1,y} s\alpha_1 c\beta_1 \\
    l_{2,y} c\alpha_2 c\beta_2 + l_{2,z} s\beta_2 & - l_{2,x} c\alpha_2 c\beta_2 + l_{2,z} s\alpha_2 c\beta_2 &  - l_{2,x} s\beta_2 - l_{2,y} s\alpha_2 c\beta_2  \\
    l_{3,y} c\alpha_3 c\beta_3 + l_{3,z} s\beta_3 & - l_{3,x} c\alpha_3 c\beta_3 + l_{3,z} s\alpha_3 c\beta_3 & - l_{3,x} s\beta_3 - l_{3,y} s\alpha_3 c\beta_3  \\
    l_{4,y} c\alpha_4 c\beta_4 + l_{4,z} s\beta_4 &    - l_{4,x} c\alpha_4 c\beta_4 + l_{4,z} s\alpha_4 c\beta_4   &   - l_{4,x} s\beta_4 - l_{4,y} s\alpha_4 c\beta_4           
\end{bmatrix}$.


\begin{equation}
    \boldsymbol{N}^\top_M = \begin{bmatrix}
        0&    0&     -1&     0&  0&   1&  0&    0&     -1&   0&    0&    1\\
        -\frac{l}{b}&    -1&    0&     \frac{l}{b}&    0&    0&    0&    0&    0&     0&   1&   0\\
        -1&    0&     0&     0&  0&   0&  0&    0&     0&    1&    0&    0\\
        -\frac{l}{b}&    -1&    0&     \frac{l}{b}&    0&    0&    0&    1&   0&    0&   0&     0\\
        0&    0&     0&     -1&  0&   0&  1&    0&     0&    0&    0&    0\\
        0&    -1&     0&     0&  -1&   0&  0&    0&     0&    0&    0&    0
    \end{bmatrix}.
\end{equation}

\subsection*{Proof of \Cref{prop:reachability}}
While this can be intuitively understood based on design, we formally prove \Cref{prop:reachability}: that the system is reachable with respect to all position and orientation states and their derivatives.

\begin{proof}[Proof of \Cref{prop:reachability}]
By design, thrust vector by $i^{th}$ rotor, $\boldsymbol{t}_i$ is given by
\begin{equation}
    \boldsymbol{t}_i =   c_t\Omega_i^2
    \begin{bmatrix}
        \sin\alpha_i \cos\beta_i \\
        -\sin\beta_i \\ \cos\alpha_i\cos\beta_i
    \end{bmatrix}, \label{eq:ti}
\end{equation}
where $\alpha_i$ and $\beta_i$ are the servo motor angles, $\Omega_i$ is the angular velocity of $i^{th}$ rotor and $c_t$ is the thrust coefficient. Based on relation \eqref{eq:ti}, 
we can always find a solution $(\alpha_i, \beta_i, \Omega_i)$ for any given $\boldsymbol{t}_i$, using sine and cosine inverses. 
In other words, any vector $\boldsymbol{t}_i$ can be achieved; hence, we consider the combined thrust vector $\boldsymbol{t} = [\boldsymbol{t}_1^\top \; \boldsymbol{t}_2^\top \; \boldsymbol{t}_3^\top \; \boldsymbol{t}_4^\top]^\top$, as a high-level control input. 

\textbf{\textit{Note:}} Only the magnitude of $\boldsymbol{t}_i$ has an upper limit, solely based on the maximum possible angular speed of propeller $\Omega_i$. Any direction, based on servo angles $\alpha_i$ and $\beta_i$, can be commanded. 

The forces and torques on the drone due to each of the propeller thrusts are given by 
\begin{equation}
    \begin{bmatrix}
        \boldsymbol{f} \\ \boldsymbol{\tau}
    \end{bmatrix} = \begin{bmatrix}
        \sum_{i=1}^4 \boldsymbol{t}_i \\ \sum_{i=1}^4 \boldsymbol{l}_i\times\boldsymbol{t}_i 
    \end{bmatrix} = \begin{bmatrix} \boldsymbol{I} & \boldsymbol{I} & \boldsymbol{I} & \boldsymbol{I} \\ \boldsymbol{l}_1 {\times} & \boldsymbol{l}_2 {\times} & \boldsymbol{l}_3 {\times} & \boldsymbol{l}_4{\times} \end{bmatrix} \begin{bmatrix}
        \boldsymbol{t}_1\\\boldsymbol{t}_2\\\boldsymbol{t}_3\\\boldsymbol{t}_4
    \end{bmatrix} = \boldsymbol{M} \begin{bmatrix}
        \boldsymbol{t}_1\\\boldsymbol{t}_2\\\boldsymbol{t}_3\\\boldsymbol{t}_4
    \end{bmatrix}, \label{eq:linear_force_torque}
\end{equation}
where $\boldsymbol{M}  = \begin{bmatrix}
    \scriptstyle 1 & \scriptstyle 0 & \scriptstyle 0 & \scriptstyle 1 & \scriptstyle 0 & \scriptstyle 0 & \scriptstyle 1 & \scriptstyle 0 & \scriptstyle 0 & \scriptstyle 1 & \scriptstyle 0 & \scriptstyle 0 \\
    \scriptstyle 0 & \scriptstyle 1 & \scriptstyle 0 & \scriptstyle 0 & \scriptstyle 1 & \scriptstyle 0 & \scriptstyle 0 & \scriptstyle 1 & \scriptstyle 0 & \scriptstyle 0 & \scriptstyle 1 & \scriptstyle 0 \\
    \scriptstyle 0 & \scriptstyle 0 & \scriptstyle 1 & \scriptstyle 0 & \scriptstyle 0 & \scriptstyle 1 & \scriptstyle 0 & \scriptstyle 0 & \scriptstyle 1 & \scriptstyle 0 & \scriptstyle 0 & \scriptstyle 1 \\
    \scriptstyle 0 & \scriptstyle 0 & \scriptstyle b & \scriptstyle 0 & \scriptstyle 0 & \scriptstyle -b & \scriptstyle 0 & \scriptstyle 0 & \scriptstyle -b & \scriptstyle 0 & \scriptstyle 0 & \scriptstyle b \\
    \scriptstyle 0 & \scriptstyle 0 & \scriptstyle -l & \scriptstyle 0 & \scriptstyle 0 & \scriptstyle -l & \scriptstyle 0 & \scriptstyle 0 & \scriptstyle l & \scriptstyle 0 & \scriptstyle 0 & \scriptstyle l \\
    \scriptstyle -b & \scriptstyle l & \scriptstyle 0 & \scriptstyle b & \scriptstyle l & \scriptstyle 0 & \scriptstyle b & \scriptstyle -l & \scriptstyle 0 & \scriptstyle -b & \scriptstyle -l & \scriptstyle 0 
\end{bmatrix}$, and $l$ and $b$ are the half-length and half-breadth of the drone, respectively, as shown in \Cref{eq:inverse}. The linear mapping matrix $\boldsymbol{M}:\boldsymbol{t}\rightarrow (\boldsymbol{f},\boldsymbol{\tau})$ is an underdetermined system since $\boldsymbol{M}$ is a $6\times12$ matrix with rank 6, which means that the columns of $M$ span the vector space $\mathbb{R}^6$. Thus, this system of equations has infinite solutions. Since we always have a solution, we prove that any force/torque is achievable. 
Given the dynamics in \cref{eq:translational_dynamics,eq:rotational_dynamics}, \ie
\begin{equation*}
    \begin{aligned}
        \dot{\boldsymbol{p}} &= \boldsymbol{v}, \quad \quad \;\;\;\; m \dot{\boldsymbol{v}} = m \boldsymbol{g} + \boldsymbol{R}\boldsymbol{f}, \\
        \dot{\boldsymbol{R}} &= \boldsymbol{R}[{\boldsymbol{\omega}}]_\times, \quad  \boldsymbol{J} \dot{\boldsymbol{\omega}} = -\boldsymbol{\omega} \times \boldsymbol{J} \boldsymbol{\omega} + \boldsymbol{\tau},
    \end{aligned}    
\end{equation*}
and that the maximum limit on $\Omega_i$ is reasonable, so that the force/torque are always enough to counter gravity $\boldsymbol{g}$ and gyroscopic moments $-\boldsymbol{\omega} \times \boldsymbol{J} \boldsymbol{\omega}$, we prove that all states are \textit{reachable}.
\end{proof}

\subsection*{Proof of \Cref{thm:exp-stable-corrected}}
Using the controller defined in \eqref{eq:des_force_moment}, the corresponding closed loop error dynamics is
\begin{align}
    \boldsymbol{\dot{e}_p} &= \boldsymbol{e_v} \\
    \boldsymbol{\dot{e}_v} &= \frac{1}{m}\left(-k_{\boldsymbol{p}}\boldsymbol{e_p} -  k_{\boldsymbol{v}}\boldsymbol{e_v}\right)\\
    \boldsymbol{\dot{e}_R} &= \frac{1}{2} \left(tr\left(\boldsymbol{R}^{\top} \boldsymbol{R}_{d}\right)\boldsymbol{I} - \boldsymbol{R}^{\top} \boldsymbol{R}_{d}\right)\boldsymbol{e_\omega} \\
    \boldsymbol{\dot{e}_\omega} &= \boldsymbol{J}^{-1}\left(-k_{\boldsymbol{R}}\boldsymbol{e_R} - k_{\boldsymbol{\omega}} \boldsymbol{e_\omega} \right)
\end{align}
The error state $(\boldsymbol{e_p, e_v, e_R, e_\omega}) = (\boldsymbol{0, 0, 0, 0})$ is an exponentially stable equilibrium of the closed loop dynamics. We define the regions of attraction and prove \Cref{thm:exp-stable-corrected} below.

\begin{proof}[Proof of \Cref{thm:exp-stable-corrected}]
By \Cref{cor:wrench-preservation}, the thrust allocation delivers exactly the commanded wrench $\boldsymbol{f}_{\mathrm{Geo}}, \boldsymbol{\tau}_{\mathrm{Geo}}$ regardless of the null-space correction; hence the closed-loop error dynamics under the corrected allocation are identical to those obtained by substituting the controller~\eqref{eq:des_force_moment} directly into the vehicle dynamics~\eqref{eq:translational_dynamics},\eqref{eq:rotational_dynamics}, as derived above.

We define the Lyapunov Candidate for the attitude dynamics. Let a Lyapunov candidate $\mathcal{V}_{2}$ be
\begin{equation}
    \mathcal{V}_{2}=\frac{1}{2} \boldsymbol{e}_{\boldsymbol{\omega}}^\top \boldsymbol{J} \boldsymbol{e}_{\boldsymbol{\omega}} + {k}_{\boldsymbol{R}} \Psi\left(\boldsymbol{R}, \boldsymbol{R}_{d}\right) + c_{2} \boldsymbol{e}_{\boldsymbol{R}}^\top \boldsymbol{e}_{\boldsymbol{\omega}},
\end{equation}
where $c_{2}$ is a non-negative constant. Suppose the initial condition in \cref{eq:ROA} is satisfied, it follows directly \cite[Proposition~1]{lee2010geometric} that 
\begin{align}
    z_{2}^{\top} M_{21} z_{2} \leq \mathcal{V}_{2} \leq z_{2}^{\top} M_{22} z_{2} \\
    \dot{\mathcal{V}}_{2} \leq -z_{2}^{\top} W_{2} z_{2}
\end{align}
where $z_{2}=\left[\begin{array}{c}\left\|\boldsymbol{e}_{\boldsymbol{R}}\right\| \\
\left\|\boldsymbol{e}_{\boldsymbol{\omega}}\right\|\end{array}\right]$,
$M_{21}=\frac{1}{2}\left[\begin{array}{cc}
    k_{\boldsymbol{R}} & -c_{2} \\ 
    -c_{2} & \lambda_{\min }(\boldsymbol{J}) 
        \end{array}\right]$,
$M_{22}=\frac{1}{2}\left[\begin{array}{cc}
    \frac{2 k_{\boldsymbol{R}}}{2-\psi_{2}} & c_{2} \\
    c_{2} & \lambda_{\max }({\boldsymbol{J}})
        \end{array}\right]$, $\psi_2 = \frac{1}{k_R}\mathcal{V}_2|_{t=0,c_2=0}$, and $W_{2}= \left[\begin{array}{cc}
\frac{c_{2} k_{\boldsymbol{R}}}{\lambda_{\max }({\boldsymbol{J}})} & -\frac{c_{2} k_{\boldsymbol{\omega}}}{2 \lambda_{\min }({\boldsymbol{J}})} \\
-\frac{c_{2} k_{\boldsymbol{\omega}}}{2 \lambda_{\min }({\boldsymbol{J}})} & k_{\boldsymbol{\omega}}-c_{2}
\end{array}\right]$.

The matrices $W_{2}$, $M_{21}$, and $M_{22}$ are positive-definite by choosing $c_{2}$ such that
\begin{equation}
    c_{2} < \min \left\{k_{\boldsymbol{\omega}}, \frac{4 k_{\boldsymbol{\omega}} k_{\boldsymbol{R}}\lambda_{\min }(J)^{2}}{k_{\omega}^{2} \lambda_{\max }(\boldsymbol{J})+4 k_{\boldsymbol{R}} \lambda_{\min }(\boldsymbol{J})^{2}}, \sqrt{k_{\boldsymbol{R}} \lambda_{\min }(\boldsymbol{J})}\right\}
    \label{eq:c2}
\end{equation}

For translation dynamics, let a Lyapunov candidate $\mathcal{V}_{1}$ be
\begin{equation}
\mathcal{V}_{1}=\frac{1}{2} k_{\boldsymbol{p}} \left\|\boldsymbol{e}_{\boldsymbol{p}}\right\|^{2}+\frac{1}{2} m\left\|\boldsymbol{e}_{\boldsymbol{v}}\right\|^{2}+c_{1} \boldsymbol{e}_{\boldsymbol{p}}^\top \boldsymbol{e}_{\boldsymbol{v}}
\end{equation}
where $c_{1}$ is a positive constant. Clearly, we have
\begin{equation}
    z_{1}^{\top} M_{11} z_{1} \leq \mathcal{V}_{1} \leq z_{1}^{\top} M_{12} z_{1}
\end{equation}
where $z_{1}=\left[\begin{array}{c}\left\|\boldsymbol{e}_{\boldsymbol{p}}\right\| \\
\left\|\boldsymbol{e}_{\boldsymbol{v}}\right\|\end{array}\right]$, $M_{11}=\frac{1}{2}\left[\begin{array}{cc}
    k_{\boldsymbol{p}} & -c_{1} \\ 
    -c_{1} & m 
        \end{array}\right]$, and
$M_{12}=\frac{1}{2}\left[\begin{array}{cc}
    k_{\boldsymbol{p}} & c_{1} \\ 
    c_{1} & m
        \end{array}\right]$.
        
The derivative of $\mathcal{V}_{1}$ is given by
\begin{equation}
    \begin{aligned}
        \dot{\mathcal{V}}_{1}=& k_{\boldsymbol{p}} \boldsymbol{e}_{\boldsymbol{p}}^\top \boldsymbol{e}_{\boldsymbol{v}} + \boldsymbol{e}_{\boldsymbol{v}}^\top \left(-k_{\boldsymbol{p}} \boldsymbol{e}_{\boldsymbol{p}}-k_{\boldsymbol{v}} \boldsymbol{e}_{\boldsymbol{v}}\right) + c_{1} \boldsymbol{e}_{\boldsymbol{v}}^\top \boldsymbol{e}_{\boldsymbol{v}} +\frac{c_{1}}{m} \boldsymbol{e}_{\boldsymbol{p}}^\top \left(-k_{\boldsymbol{p}}  \boldsymbol{e}_{\boldsymbol{p}} - k_{\boldsymbol{v}} \boldsymbol{e}_{\boldsymbol{v}}\right) \\
        =&-\left(k_{\boldsymbol{v}} -c_{1}\right)\left\|\boldsymbol{e}_{\boldsymbol{v}}\right\|^{2}-\frac{c_{1} k_{\boldsymbol{p}} }{m}\left\|\boldsymbol{e}_{\boldsymbol{p}}\right\|^{2}-\frac{c_{1} k_{\boldsymbol{v}}}{m} \boldsymbol{e}_{\boldsymbol{p}}^\top \boldsymbol{e}_{\boldsymbol{v}} \\
        \leq& -z_{1}^{\top} W_{1} z_{1}
    \end{aligned}
\end{equation}
where $W_{1}=\left[\begin{array}{cc}
    k_{\boldsymbol{v}}-c_1 & -\frac{c_1 k_{\boldsymbol{v}}}{2m} \\ 
    -\frac{c_1 k_{\boldsymbol{v}}}{2m} & \frac{c_1 k_{\boldsymbol{p}}}{m}
        \end{array}\right]$.
The matrices $W_{1}$, $M_{11}$, and $M_{12}$ are positive-definite by choosing a positive constant $c_{1}$ such that
\begin{equation}
    c_{1} < \min \left\{\frac{4 m k_{\boldsymbol{p}} k_{\boldsymbol{v}} }{4 k_{\boldsymbol{p}} + k_{\boldsymbol{v}}^2}, \sqrt{m k_{\boldsymbol{p}}}\right\} .
    \label{eq:c1}
\end{equation}

Now let $\mathcal{V}=\mathcal{V}_{1}+\mathcal{V}_{2}$ be the Lyapunov candidate of the complete system,
\begin{equation}
    \mathcal{V} = \frac{1}{2} k_{\boldsymbol{p}} \left\|\boldsymbol{e}_{\boldsymbol{p}}\right\|^{2}+\frac{1}{2} m\left\|\boldsymbol{e}_{\boldsymbol{v}}\right\|^{2}+c_{1} \boldsymbol{e}_{\boldsymbol{p}}^\top \boldsymbol{e}_{\boldsymbol{v}} + \frac{1}{2} \boldsymbol{e}_{\boldsymbol{\omega}}^\top \boldsymbol{J} \boldsymbol{e}_{\boldsymbol{\omega}} + \boldsymbol{k}_{\boldsymbol{R}} \Psi\left(\boldsymbol{R}, \boldsymbol{R}_{d}\right) + c_{2} \boldsymbol{e}_{\boldsymbol{R}}^\top \boldsymbol{e}_{\boldsymbol{\omega}}
\end{equation}

The bound of the Lyapunov candidate $\mathcal{V}$ can be written as
\begin{equation}
    z_{1}^{T} M_{11} z_{1}+z_{2}^{T} M_{21} z_{2} \leq \mathcal{V} \leq z_{1}^{T} M_{12} z_{1}+z_{2}^{T} M_{22} z_{2}.  
\end{equation}

The time derivative of $\mathcal{V}$ is given by 
\begin{equation}
    \dot{\mathcal{V}} \leq -z_{1}^{T} W_{1} z_{1} - z_{2}^{T} W_{2} z_{2}.
\end{equation}

By choosing $c_1$ and $c_2$ such that \cref{eq:c1,eq:c2} are satisfied, we obtain that $\mathcal{V}$ is positive definite, and $\dot{\mathcal{V}}$ is negative definite. Therefore, we conclude that the zero equilibrium of the tracking errors of the complete dynamics is exponentially stable.

By \Cref{cor:controllability}, a rotor thrust configuration realizing the required wrench $\boldsymbol{w}_{\mathrm{Geo}} = [\boldsymbol{f}_{\mathrm{Geo}}^\top \; \boldsymbol{\tau}_{\mathrm{Geo}}^\top]^\top$ always exists, so the corrected thrust allocation is well-defined at every instant; together with \Cref{cor:wrench-preservation}, this establishes that \Cref{thm:exp-stable-corrected} holds for the actual implemented flight controller.
\end{proof}

We obtain exponential stability of the generalized geometric controller for almost every pair of $\left(\boldsymbol{R}(0), \boldsymbol{R}_{d}(0)\right).$ Specifically, \Cref{thm:exp-stable-corrected} holds for $\Psi\left(\boldsymbol{R}(0), \boldsymbol{R}_{d}(0)\right) < 2$, which indicates the initial attitude error between $\boldsymbol{R}(0)$ and $\boldsymbol{R}_{d}(0)$ should be less than $180\degree$. In other words, we prove that the closed-loop controller is exponentially stable in \textit{almost} the entire $\mathsf{SO}(3)$. This result improves the exponential stability of the geometric controller of a regular quadrotor, which requires the initial attitude error between $\boldsymbol{R}(0)$ and $\boldsymbol{R}_{d}(0)$ to be less than $90\degree$~\cite{lee2010geometric}.

\subsection*{Proof of \Cref{prop:optimality}}
We prove \Cref{prop:optimality}: that the thrust distribution given by~\eqref{eq:inverse} is the optimal solution with respect to the cost function $E(\boldsymbol{t}) = \frac{1}{2}\sum_{i=1}^4 ||\boldsymbol{t}_i||_2^2$.

\begin{proof}[Proof of \Cref{prop:optimality}]
    The constraints for the problem are given by $\boldsymbol{f}=\sum_{i=1}^4 \boldsymbol{t}_i$ and $\boldsymbol{\tau}=\sum_{i=1}^4 \boldsymbol{l}_i\times\boldsymbol{t}_i$. The Lagrangian for the constrained optimization problem is given by
    \begin{align}
        \mathcal{L}\left(
            \boldsymbol{t}
        \right) &= \frac{1}{2}\sum_{i=1}^4 ||t_i||_2^2 + \lambda^{\intercal}\left(\boldsymbol{f}-\sum_{i=1}^4 \boldsymbol{t}_i\right) + \mu^{\intercal}\left(\boldsymbol{\tau}-\sum_{i=1}^4 \boldsymbol{l}_i\times\boldsymbol{t}_i\right).\label{eq:lagrangian}
    \end{align}
    The gradient of Lagrangian is $\nabla\mathcal{L}=\begin{bmatrix}\left(\frac{\partial\mathcal{L}}{\partial\boldsymbol{t}_1}\right)^\top & \left(\frac{\partial\mathcal{L}}{\partial\boldsymbol{t}_2}\right)^\top & \left(\frac{\partial\mathcal{L}}{\partial\boldsymbol{t}_3}\right)^\top & \left(\frac{\partial\mathcal{L}}{\partial\boldsymbol{t}_4}\right)^\top\end{bmatrix}^\top$.
    Without loss of generality, we expand $\frac{\partial\mathcal{L}}{\partial\boldsymbol{t}_1}$ using appropriate values for $\boldsymbol{r}_i$, and equating it with $0$, we get
    \begin{align*}
        \frac{\partial\mathcal{L}}{\partial\boldsymbol{t}_1} = \begin{bmatrix}
            \partial\mathcal{L}/\partial t_{1x}\\
            \partial\mathcal{L}/\partial t_{1y}\\
            \partial\mathcal{L}/\partial t_{1z}
        \end{bmatrix} = \begin{bmatrix}
            t_{1x} - \lambda_1 - \mu_3(-b)\\
            t_{1y} - \lambda_2 - \mu_3(l) \\
            t_{1z} - \lambda_3 - (\mu_1b-\mu_2l),
        \end{bmatrix} = 0 .
    \end{align*}
    Equivalently, we have
    \begin{equation*}
        \boldsymbol{t}_1 = \lambda + \begin{bmatrix}
            0 & 0 & -b \\ 0 & 0 & l \\ b & -l & 0
        \end{bmatrix}\mu.
    \end{equation*}
    
    Similarly, solving further for all $\frac{\partial\mathcal{L}}{\partial\boldsymbol{t}_2}=0$, $\frac{\partial\mathcal{L}}{\partial\boldsymbol{t}_3}=0$, $\frac{\partial\mathcal{L}}{\partial\boldsymbol{t}_4}=0$, and constraints $\boldsymbol{f}=\sum_{i=1}^4 \boldsymbol{t}_i$ and $\boldsymbol{\tau}=\sum_{i=1}^4 \boldsymbol{l}_i\times\boldsymbol{t}_i$, the solution for multipliers $\lambda$ and $\mu$ is
    \begin{align}
        \lambda = \frac{\boldsymbol{f}}{4}, \quad \quad \mu = \begin{bmatrix}
            \tau_x/4b^2 \\ \tau_y/4l^2 \\ \tau_z/4r^2
        \end{bmatrix}. \label{eq:mu_lambda_solution}
    \end{align}
    Substituting these in the solutions for $\boldsymbol{t}_i$, we get
    \begin{align*}
        \boldsymbol{t}_1 = \boldsymbol{f}/4 + \begin{bmatrix} 0 & 0 & -b/r^2 \\ 0 & 0 & l/r^2 \\ 1/b & -1/l & 0 \end{bmatrix} \boldsymbol{\tau}/4, &\quad \quad
        \boldsymbol{t}_2 = \boldsymbol{f}/4 + \begin{bmatrix} 0 & 0 & b/r^2 \\ 0 & 0 & l/r^2 \\ -1/b & -1/l & 0 \end{bmatrix} \boldsymbol{\tau}/4 \\
        \boldsymbol{t}_3 = \boldsymbol{f}/4 + \begin{bmatrix} 0 & 0 & b/r^2 \\ 0 & 0 & -l/r^2 \\ -1/b & 1/l & 0 \end{bmatrix} \boldsymbol{\tau}/4, &\quad \quad
        \boldsymbol{t}_4 = \boldsymbol{f}/4 + \begin{bmatrix} 0 & 0 & -b/r^2 \\ 0 & 0 & -l/r^2 \\ 1/b & 1/l & 0 \end{bmatrix} \boldsymbol{\tau}/4. 
    \end{align*}
    This is the same as the solutions proposed in the inverse mapping equations \eqref{eq:inverse}.

    Equivalently, we can use the linear map $\boldsymbol{M}:\boldsymbol{t}\rightarrow (\boldsymbol{f},\boldsymbol{\tau})$ to find the solution. Since we already proved that this is an underdetermined system with infinite solutions, we can use pseudo-inverse $\boldsymbol{M}^{\dagger}=\boldsymbol{M}^{\top}\left(\boldsymbol{MM}^{\top}\right)^{-1}$ to find the suitable $\boldsymbol{t}_i$ values, \ie
    \begin{equation}
        \boldsymbol{t} = \boldsymbol{M}^{\dagger} \begin{bmatrix}
        \boldsymbol{f}^\top & \boldsymbol{\tau}^\top
        \end{bmatrix}^\top. \label{eq:pseudoinverse} 
    \end{equation}
    The pseudo-inverse gives a minimum norm-squared solution among the infinite solutions, \textit{i.e.,} $\min\frac{1}{2}||\boldsymbol{t}||_2^2$, which is same as cost function, $E(\boldsymbol{t})$. The further expansion of the expression \eqref{eq:pseudoinverse} gives the same solution as the proposed inverse mapping.
\end{proof}


\end{document}